\tikzstyle{vertex}=[circle, draw, fill=gray!80!white,thick,scale=1.2]
\tikzstyle{edge}=[draw=black, thick,-]
\newtcbox{\mybox}[2][]{mystyle, #1, title=#2}
\newtcbox{\mytikzbox}[2][]{mystyle, tikz upper, #1, title=#2}
\let\originalleft\left
\let\originalright\right
\renewcommand{\left}{\mathopen{}\mathclose\bgroup\originalleft}
\renewcommand{\right}{\aftergroup\egroup\originalright}
\theoremstyle{definition}
\newtheorem{theorem}{Theorem}
\newtheorem{proposition}[theorem]{Proposition}
\newtheorem{lemma}[theorem]{Lemma}
\newcommand{\cmark}{\ding{51}}
\newcommand{\xmark}{\ding{55}}
\setlist[enumerate]{itemsep=0.2ex, topsep=0.5\topsep}
\setlist[description]{itemsep=0.2ex, topsep=0.5\topsep}
\setlist[itemize]{itemsep=0.2ex, topsep=0.5\topsep}
\def\thmt@refnamewithcomma #1#2#3,#4,#5\@nil{%
\@xa\def\csname\thmt@envname #1utorefname\endcsname{#3}%
\ifcsname #2refname\endcsname
\csname #2refname\expandafter\endcsname\expandafter{\thmt@envname}{#3}{#4}%
\fi
}
\newcommand{\mG}{\mathbf{G}}
\newcommand{\cO}{\mathcal{O}}
\newcommand{\Nb}{\mathbb{N}}
\newcommand{\Qb}{\mathbb{Q}}
\newcommand{\Rb}{\mathbb{R}}
\newcommand{\vc}[1]{\mathsf{VC}(#1)}
\newcommand{\UPD}{\mathsf{UPD}}
\newcommand{\new}[1]{\emph{#1}}
\newcommand{\trans}{^T}
\renewcommand{\vc}[1]{\bm{#1}}
\newcommand{\oms}{\{\!\!\{}
\newcommand{\cms}{\}\!\!\}}
\newcommand{\tup}[1]{{(#1)}}
\newcommand*{\tran}{^{\mkern-1.5mu\mathsf{T}}}
\DeclareFontFamily{U}{mathx}{\hyphenchar\font45}
\DeclareFontShape{U}{mathx}{m}{n}{<-> mathx10}{}
\DeclareSymbolFont{mathx}{U}{mathx}{m}{n}
\DeclareMathAccent{\widebar}{0}{mathx}{"73}
\DeclarePairedDelimiterX{\norm}[1]{\lVert}{\rVert}{#1}
\newcommand{\xhdr}[1]{{\noindent\bfseries #1}}
\begin{document}

\twocolumn[

	\aistatstitle{Exploring the Power of Graph Neural Networks in Solving Linear Optimization Problems}

	\aistatsauthor{Chendi Qian \And Didier Chételat \And Christopher Morris}

	\aistatsaddress{RWTH Aachen University\And Polytechnique Montréal \And RWTH Aachen University} ]

\begin{abstract}
	Recently, machine learning, particularly message-passing graph neural networks (MPNNs), has gained traction in enhancing exact optimization algorithms. For example, MPNNs speed up solving mixed-integer optimization problems by imitating computational intensive heuristics like strong branching, which entails solving multiple linear optimization problems (LPs). Despite the empirical success, the reasons behind MPNNs' effectiveness in emulating linear optimization remain largely unclear. Here, we show that MPNNs can simulate standard interior-point methods for LPs, explaining their practical success. Furthermore, we highlight how MPNNs can serve as a lightweight proxy for solving LPs, adapting to a given problem instance distribution. Empirically, we show that MPNNs solve LP relaxations of standard combinatorial optimization problems close to optimality, often surpassing conventional solvers and competing approaches in solving time.
\end{abstract}

\section{\uppercase{Introduction}}

Recently, there has been a surge of interest in training \new{message-passing graph neural networks} (MPNNs) to imitate steps of classical algorithms, such as for shortest-path problems \cite{Cap+2021,velickovic2020neural}. As many of those problems can be formulated as linear optimization problems (LPs), it is natural to ask whether MPNNs could be trained to solve general LPs, at least approximately.

Another recent line of research makes this question particularly intriguing. In integer linear optimization, state-of-the-art solvers all rely on the branch-and-bound algorithm, in which one must repeatedly select variables, subdividing the search space. The best-known heuristic for variable selection is known as ``strong branching,'' which entails solving LPs to score the variables. This heuristic is, unfortunately, too computationally expensive to use in practice. However, in recent years, there has been a collection of works \citep{Gas+2019, Gup+2020, Gup+2022, Nai+2020, Sey+2023} that have proposed to use MPNNs to imitate strong branching, with impressive success. No theoretical explanation has ever been put forward to explain this success. However, perhaps the most straightforward explanation would be that MPNNs implicitly learn to imitate the LP solving underlying strong branching.

\citet{Che+2023} provide a first step towards an explanation. In this work, the authors propose to encode LPs as bipartite graphs in the spirit of \citep{Gas+2019} and show that MPNNs, in principle, can learn to predict the optimal solution of an LP instance up to arbitrary small $\varepsilon$ concerning the supremum norm. They also provide some small-scale experiments suggesting that MPNNs can learn to approximate LP solutions surprisingly well. While a step in the right direction, their theoretical result heavily relies on invoking the universal approximation theorem for multi-layer perceptrons~\cite{Cyb+1992,Les+1993}, and therefore does not explain why modestly-sized MPNNs could be particularly effective LP solvers in practice.

In this paper, we present instead a more specific explanation. We show that several variants of \new{interior-point methods} (IPMs)~\citet{gondzio2012,nocedal2006numerical}, an established class of polynomial-time algorithms for solving LPs, can be interpreted as an MPNN with a specific architecture and choice of parameters that takes as input a graph representation of the LP. Specifically, for two common IPM variants, we show that a sequence of standard MPNN steps can emulate a single iteration of the algorithm on a tripartite (rather than bipartite) graph representation of the LP. This novel theoretical result suggests several conclusions. First, it indicates that MPNNs appear successful at imitating LP solving because they might often be imitating IPMs and that there is a close connection between MPNNs and this specific LP algorithm.
Secondly, although the MPNN's architecture in our theoretical result involves many MPNN layers, MPNNs are a natural choice of machine learning model for LP solving. It is likely possible to approximately solve LPs with fewer layers.

To prove the last hypothesis, we train MPNNs with several layers less than predicted by our theoretical results to imitate the output of a practical IPM algorithm for LP solving, resulting in the \new{IPM-MPNN} architecture. Our empirical results show that IPM-MPNNs can lead to reduced solving times compared to a state-of-the-art LP solver with time constraints and competing neural network-based approaches; see~\cref{fig:overview} for an overview of our approach.

\emph{In summary, our findings significantly contribute to the theoretical framework of data-driven exact optimization using MPNNs, and we also showcase the potential of MPNNs in serving as a light-weight proxy for solving LPs.}

\subsection{Additional related works}
In the following, we discuss relevant related work.

\xhdr{MPNNs} MPNNs~\citep{Gil+2017,Sca+2009} have emerged as a flexible framework for machine learning on graphs and relational data. Notable instances of this architecture include, e.g.,~\cite{Duv+2015,Ham+2017,Vel+2018}, and the spectral approaches proposed in, e.g.,~\cite{Bru+2014,Defferrard2016,Kip+2017}---all of which descend from early work in~\cite{bas+1997,Kir+1995,Mer+2005,mic+2009,mic+2005,Sca+2009,Spe+1997}.

\xhdr{Machine learning for combinatorial optimization} \citet{Ben+2021} discuss and review machine learning approaches to enhance combinatorial optimization (CO). Concrete examples include the imitation of computationally intensive variable selection rules within the branch-and-cut framework~\citep{khalil2016,zarpellon2020}, learning to run (primal) heuristics~\citep{khalil2017a,chmiela2021learning}, learning decompositions of large MILPs for scalable heuristic solving~\citep{song2020}, learning to generate cutting planes~\citep{Dez+2023} or leveraging machine learning to find (primal) solutions to stochastic integer problems quickly~\citep{bengio2020}---see~\citet{kotary2021end} for a high-level overview of recent advances.

\xhdr{MPNNs for CO} Many prominent CO problems involve graph or relational structures, either directly given as input or induced by the variable-constraint interactions. Recent progress in using MPNNs to bridge the gap between machine learning and combinatorial optimization is surveyed in~\cite{Cap+2021}.

Most relevant to the present work, \citet{Gas+2019} proposed to encode the variable-constraint interaction of a mixed-integer linear optimization problem as a bipartite graph and trained MPNNs in a supervised fashion to imitate the costly strong branching heuristic, which entails solving multiple linear optimization problems, within the branch-and-cut framework~\citep{Ach+2005}. Building on that,~\citet{Gup+2020} proposed a hybrid branching model using an MPNN at the initial decision point and a light multi-layer perceptron for subsequent steps, showing improvements on pure CPU machines. Subsequently, \citet{Nai+2020} expanded the MPNN approach to branching by implementing a GPU-friendly parallel linear programming solver using the alternating direction method of multipliers that allows scaling the strong branching expert to substantially larger instances, also combining this innovation with a novel MPNN approach to diving. For all the above works, it remains largely unclear why MPNNs are good at (approximately) predicting strong branching scores. Moreover,~\citet{Kha+2022} used MPNNs to predict the probability of variables being assigned to $0$ or $1$ in near-optimal solutions of binary-integer linear optimization problems.

\citet{Ding2019} used MPNNs on a tripartite graph consisting of variables, constraints, and a single objective node enriched with hand-crafted node features. The target is to predict the 0-1 values of the so-called \emph{stable variables}, i.e., variables whose assignment does not change over a set of pre-computed feasible solutions. \cite{li2022learning} used MPNNs on a bipartite graph together with a pointer network~\citep{Bel+2016} to reorder the variables of a given LP instance, resulting in reduced solving time. \citet{Fan+2023} leveraged MPNNs to find good initial basis solutions for the Simplex algorithm.

Finally~\cite{Wu+2023} expressed an LP as an ordinary differential equations system whose state solution converges to the LP's optimal solution and trained a feed-forward neural network to approximate this state solution. However, their approach hinges on the need to compute the Jacobian matrix, rendering its training phase computationally costly.

\begin{figure*}
	\begin{subfigure}[]{.25\linewidth}
		\vspace{35pt}
		\centering\scalebox{0.65}{\begin{tikzpicture}[scale=0.9]

\tikzstyle{mybox} = [draw=red, fill=blue!20, very thick,
rectangle, rounded corners, inner sep=10pt, inner ysep=20pt]
\tikzstyle{fancytitle} =[fill=red, text=white]

\def\bend{15}
\def\opac{0.2}

\tikzset{line/.style={draw,line width=1.5pt}}
\tikzset{arrow/.style={line,->,>=stealth}}
\tikzset{snake/.style={arrow,line width=1.3pt,decorate,decoration={snake,amplitude=1,segment length=6,post length=7}}}
\tikzset{box/.style={dash pattern=on 5pt off 2pt,inner sep=5pt,rounded corners=3pt}}
\tikzset{node/.style={shape=circle,inner sep=0pt,minimum width=15pt,line width=1pt}}
%\tikzset{light/.style={shading=axis,left color=white,right color=black,shading angle=-45}}

%\tikzset{light/.style={shading=axis,left color=lightgray,right color=black,shading angle=-45}}
\tikzset{light/.style={fill=gray}}\tikzset{light/.style={shading=axis,left color=lightgray,right color=black,shading angle=-45}}
\tikzset{light/.style={fill=gray}}

% Bar plots.
\tikzset{pics/bar/.style args={#1/#2/#3}{code={
\node[inner sep=0pt,minimum height=0.3cm] (#1) at (0,-0.15) {};

\draw[fill=blue,blue] (-0.15,-0.2) rectangle (-0.075,0.25*#2-0.2);
\draw[fill=Orchid,Orchid] (-0.025,-0.2) rectangle (0.05,0.25*#3-0.2);

\begin{scope}[transparency group,opacity=\opac]
\draw[fill=red,light] (-0.15,-0.2) rectangle (-0.075,0.25*#2-0.2);
\draw[fill=Orchid,light] (-0.025,-0.2) rectangle (0.05,0.25*#3-0.2);
\end{scope}

\draw[arrow,line width=1pt] (-0.2,-0.2) -- (0.2,-0.2);
\draw[arrow,line width=1pt] (-0.2,-0.21) -- (-0.2,0.2);
}}}

% Nodes of second graph.
\node[line,node] (c1) at (0.5, 0) {$c_1$};
\node[line,node] (c2) at (0.5, -1.5) {$c_2$};
\node[line,node] (x1) at (1.5, 0.75) {$x_1$};
\node[line,node] (x2) at (1.5, -0.75) {\textcolor{black}{$x_2$}};
\node[line,node] (x3) at (1.5, -2.25) {$x_3$};

\node[line,node] (o1) at (3, -0.75) {$o$};

\begin{scope}[transparency group,opacity=\opac]

\node[line,node,light] at (c1) {};
\node[line,node,light] at (c2) {};
\node[line,node,light] at (x1) {};
\node[line,node,light] at (x2) {};
\node[line,node,light] at (x3) {};
\node[line,node,light] at (o1) {};

\end{scope}

\path[line,lightgray] (x1) to (c1);
\path[line,lightgray] (c1) to (x2);
\path[line,lightgray] (x2) to (c2);
\path[line,lightgray] (c2) to (x3);

\path[line,lightgray] (o1) to (x1);
\path[line,lightgray] (o1) to (x2);
\path[line,lightgray] (o1) to (x3);
\path[line,lightgray] (o1) to (c1);
\path[line,lightgray] (o1) to (c2);

\end{tikzpicture}
%\end{document}}
		\vspace{23pt}
		\caption{Representing an LP instance with two constraints and three variables as a tripartite graph. \label{fig:lp}}
	\end{subfigure}
	\begin{subfigure}[]{.6\linewidth}
		\centering\scalebox{0.7}{\input{figure_1}}
		\caption{IPM-MPNNs emulate interior-point methods.\\ \label{fig:overview}}
	\end{subfigure}
	\caption{Overview of our IPM-MPNN framework.}
	\vspace{-10pt}
\end{figure*}

\section{\uppercase{Background}}

In the following, we describe the necessary background.

\xhdr{Notation} Let $\Nb \coloneqq \{ 1,2,3, \dots \}$. For $n \geq 1$, let $[n] \coloneqq \{ 1, \dotsc, n \} \subset \Nb$. We use $\{\!\!\{ \dots\}\!\!\}$ to denote multisets, i.e., the generalization of sets allowing for multiple instances for each of its elements. A \new{graph} $G$ is a pair $(V(G),E(G))$ with \emph{finite} sets of
\new{vertices} or \new{nodes} $V(G)$ and \new{edges} $E(G) \subseteq \{ \{u,v\} \subseteq V(G) \mid u \neq v \}$. For ease of notation, we denote the edge $\{u,v\}$ in $E(G)$ by $(u,v)$ or $(v,u)$. Throughout the paper, we use standard notation, e.g., we denote the \new{neighborhood} of a node $v$ by $N(v)$, and so on; see~\cref{notation_app} for details. Moreover, let $\vc{x} \in \mathbb{R}^{1 \times d}$, then $\vc{D}(\vc{x})$ denotes the diagonal matrix with diagonal $\vc{x}$, $\vc{0}$ and $\vc{1}$ denote the vector of zero and ones, respectively, with an appropriate number of entries. By default, a vector $\vc{x} \in \mathbb{R}^{d}$ is a column vector.

\xhdr{Linear optimization problems} A \new{linear optimization problem} (LP) aims at optimizing a linear function over a feasible set described as the intersection of finitely many half-spaces, i.e., a polyhedron. We restrict our attention to feasible and bounded LPs. Formally, an instance $I$ of an LP is a tuple $(\vc{A},\vc{b},\vc{c})$, where $\vc{A}$ is a matrix in $\Qb^{m \times n}$, and $\vc{b}$ and $\vc{c}$ are vectors in $\Qb^{m}$ and $\Qb^{n}$, respectively.
We aim at finding a vector $\vc{x}^*$ in $\Qb^{n}$ that minimizes $\vc{c}\tran \vc{x}^*$ over the \emph{feasible set}
\begin{align}\label{eq:def-lp}
	\begin{split}
		F(I) = \{ \vc{x} \in \Qb^n \mid &\vc{A}_j \vc{x} \leq b_j \text{ for } j \in [m] \text{ and }  \\
		& x_i \geq 0 \text{ for } i \in [n]  \}.
	\end{split}
\end{align}
In practice, LPs are solved using the Simplex method or polynomial-time IPMs~\citep{nocedal2006numerical}.

We now detail the theoretical result that motivates our approach. We first summarize both interior point methods for linear optimization and MPNNs and then prove a theorem that relates the two.

\xhdr{Interior-point methods for linear optimization}
IPMs are algorithms for solving constrained optimization problems. They are particularly efficient for linear optimization, where they were first developed as a (polynomial-time) alternative to the Simplex methods. Variants of the algorithm differ in theoretical guarantees and empirical performance but revolve around the same core approach~\citep{Sha+2012}. In short, the LP to solve is replaced by a perturbed family of problems where a barrier penalty has replaced hard constraints with a parameter $\mu>0$. IPMs alternate between taking a Newton step to solve this perturbed problem and decreasing this parameter $\mu>0$, eventually converging to the optimal solution of the original problem.

For concreteness, we present two variants of the approach, an algorithm with theoretical guarantees and a practical algorithm that could be used in practice, following \citet[Chapter 14]{nocedal2006numerical} and \citet{gondzio2012}, respectively. In the next section, we will show that both algorithms can be connected to MPNNs.

The core idea of IPMs is as follows. First, we consider a perturbed version of the LP \eqref{eq:def-lp},
\begin{align}
	\underset{\vc{x} \in \Qb^n}{\min} & \vc{c}\tran \vc{x} - \mu[\vc{1}\tran \log(\vc{Ax}-\vc{b}) + \vc{1}\tran\log(\vc{x})]
	\label{eq:perturbed-lp}
\end{align}
for some $\mu>0$. By introducing the variables $s_i=\mu/x_i$, $r_j=\vc{A}_jx-b_j$ and $w_j=\mu/r_j$, the first-order optimality conditions for \eqref{eq:perturbed-lp} can be written as a system
\begin{align*}
	\vc{Ax}^* - \vc{r}^*            & = \vc{b}              \\
	\vc{A}\tran \vc{w}^* + \vc{s}^* & = \vc{c}              \\
	x^*_is^*_i                      & = \mu    & i \in [n], \\
	w^*_ir^*_i                      & = \mu    & j \in [m],
\end{align*}
with $\vc{x}^*, \vc{w}^*, \vc{s}^*, \vc{r}^* \geq \vc{0}$. Let $\sigma\in(0,1)$ be another hyperparameter. The two algorithms start from an initial positive point $(\vc{x}_0, \vc{w}_0, \vc{s}_0, \vc{r}_0)> \vc{0}$, and alternate between computing the Newton step for the perturbed problem \eqref{eq:perturbed-lp} at barrier parameter $\sigma\mu$
\begin{align*}
	\resizebox{1.05\columnwidth}{!}{$
			\begin{bmatrix}
				\vc{A}         & \vc{0}         & \vc{0}         & \vc{-I}        \\
				\vc{0}         & \vc{A}\tran    & \vc{I}         & \vc{0}         \\
				\vc{D}(\vc{s}) & \vc{0}         & \vc{D}(\vc{x}) & \vc{0}         \\
				\vc{0}         & \vc{D}(\vc{r}) & \vc{0}         & \vc{D}(\vc{w})
			\end{bmatrix}
			\begin{bmatrix}
				\Delta \vc{x} \\
				\Delta \vc{w} \\
				\Delta \vc{s} \\
				\Delta \vc{r}
			\end{bmatrix}
			=
			\begin{bmatrix}
				\vc{b} - \vc{Ax} + \vc{r}                              \\
				\vc{c} - \vc{A}\tran \vc{w} - \vc{s}                   \\
				\sigma\mu\vc{1}- \vc{D}(\vc{x}) \vc{D} (\vc{s}) \vc{1} \\
				\sigma\mu\vc{1}-\vc{D}(\vc{w})\vc{D}(\vc{r})\vc{1}
			\end{bmatrix},
		$}
\end{align*}
and taking a step in that direction with length $\alpha>0$, such that the resulting point $(\vc{x}', \vc{w}', \vc{s}', \vc{r}') = (\vc{x}, \vc{w}, \vc{s}, \vc{r}) + \alpha(\Delta \vc{x}, \Delta \vc{w}, \Delta \vc{s}, \Delta \vc{r})$ satisfies $(\vc{x}', \vc{w}', \vc{s}', \vc{r}') > \vc{0}$.

The above system can be simplified as follows. First, we can infer that
\begin{align}
	\Delta \vc{s} & = \sigma\mu\vc{D}(\vc{x})^{-1}\vc{1} - \vc{s} - \vc{D}(\vc{x})^{-1}\vc{D}(\vc{s})\Delta \vc{x} \label{eq:ipm-deltas}, \\
	\Delta \vc{r} & = \sigma\mu\vc{D}(\vc{w})^{-1}\vc{1} - \vc{r} - \vc{D}(\vc{w})^{-1}\vc{D}(\vc{r})\Delta \vc{w} \label{eq:ipm-deltar},
\end{align}
which implies that
\begin{align*}
	\vc{A}\Delta  \vc{x} +  \vc{D}(\vc{w})^{-1} \vc{D}(\vc{r})\Delta \vc{w}     & = \vc{b}-\vc{A}\vc{x}+\sigma\mu\vc{D}(\vc{w})^{-1}\vc{1},       \\
	\vc{A}\tran \Delta \vc{w} - \vc{D}(\vc{x})^{-1} \vc{D}(\vc{s})\Delta \vc{x} & = \vc{c}-\vc{A}\tran \vc{w}-\sigma\mu\vc{D}(\vc{x})^{-1}\vc{1}.
\end{align*}
Therefore, $ \Delta \vc{x} =$
\begin{align}
	                    & \hspace*{-27pt}\vc{D}(\vc{s})^{-1}\vc{D}(\vc{x})[\vc{A}\tran \Delta \vc{w} - \vc{c} + \vc{A}\tran \vc{w} + \sigma\mu\vc{D}(\vc{x})^{-1}\vc{1}] \label{eq:ipm-deltax} \\
	\vc{Q}\Delta \vc{w} & = \vc{b}-\vc{Ax}+\sigma\mu\vc{D}(\vc{w})^{-1}1 \label{eq:ipm-deltaw}                                                                                                 \\
	                    & + \vc{AD}(\vc{s})^{-1} \vc{D}(\vc{x})[\vc{c}-\vc{A}\tran \vc{w}-\sigma\mu\vc{D}(\vc{x})^{-1}\vc{1}] \notag
\end{align}
for $\vc{Q} = \vc{AD}(\vc{s})^{-1} \vc{D}(\vc{x})\vc{A}\tran + \vc{D}(\vc{w})^{-1} \vc{D}(\vc{r})$.  Thus, by solving the linear system in~\Cref{eq:ipm-deltaw}, we can find $\Delta \vc{w}$, then $\Delta \vc{x}$, $\Delta \vc{r}$, and $\Delta \vc{s}$ through Equations~\eqref{eq:ipm-deltas}-\eqref{eq:ipm-deltax}.

The two algorithms we consider only differ in how they compute $\mu$ and $\alpha$. The theoretical algorithm recomputes, at every iteration, $\mu=(\vc{x}\tran \vc{s}+\vc{w}\tran \vc{r})/(n+m)$, and chooses $\alpha$ to be the largest $\alpha<1$ such that $x'_is'_i\geq\gamma (\vc{x}'^{\mkern-1.5mu\mathsf{T}}  \vc{s}'^{\mkern-1.5mu} + \vc{w}'^{\mkern-1.5mu\mathsf{T}} \vc{r}')/(n+m)$ and $w'_jr'_j\geq\gamma(\vc{x}'^{\mkern-1.5mu\mathsf{T}} \vc{s}'+\vc{w}'^{\mkern-1.5mu\mathsf{T}} \vc{r}')/(n+m)$ for $i \in [n]$ and $j \in [m]$ for some hyperparameter $\gamma\in(0,1]$ \citep[Algorithm 14.2]{nocedal2006numerical}. The practical algorithm instead picks $\mu_0=(\vc{x}_0\tran \vc{s}_0+ \vc{w}_0\tran\vc{r}_0)/(n+m)$ initially, for $(\vc{x}_0, \vc{w}_0, \vc{s}_0, \vc{r}_0)$ the initial point, and thereafter decreases $\mu$ as $\mu'=\sigma\mu$ at every iteration, while choosing $\alpha$ to be $\alpha=0.99\alpha'$ for $\alpha$ the largest $\alpha>0$ such that $x'_is'_i>0$, $w'_jr'_j>0$. The two algorithms are summarized in Algorithms \ref{alg:ipm-theory} and \ref{alg:ipm-practice}.

\begin{algorithm}
	\caption{Theoretical IPM for LPs}
	\label{alg:ipm-theory}
	\begin{algorithmic}[1]
		\REQUIRE An LP instance $(\vc{A},\vc{b},\vc{c})$, a barrier reduction hyperparameter $\sigma\in(0,1)$, a neighborhood hyperparameter $\gamma\in(0,1]$ and initial values $(\vc{x}_0, \vc{w}_0, \vc{s}_0, \vc{r}_0)$ such that $\vc{Ax}_0-\vc{r}_0=\vc{b}$, $\vc{A}\tran \vc{w}_0+\vc{s}_0=\vc{c}$, $(\vc{x}_0, \vc{w}_0, \vc{s}_0, \vc{r}_0)>0$ and $\min_i x_{0i} w_{0i}\geq \gamma\mu_0$, $\min_i w_{0i}r_{0i} \geq \gamma\mu_0$ for $\mu_0=(\vc{x}_0\tran \vc{s}_0+(\vc{w}_0\tran)_0)/(n+m)$.

		\REPEAT
		\STATE $\mu\leftarrow (\vc{x}\trans \vc{s}+ \vc{w}\tran \vc{r})/(n+m)$ \label{eq:ipm-theory:mu}
		\STATE Compute $\Delta \vc{w}$ by solving the linear system \newline
		$\vc{Q}\Delta \vc{w} = \vc{b}-\vc{Ax}+\sigma\mu\vc{D}(\vc{w})^{-1}\vc{1}$ \newline
		\hspace*{2.5em} $+ \vc{A}\vc{D}(\vc{s})^{-1}\vc{D}(\vc{x})[\vc{c}-\vc{A}\tran \vc{w}-\sigma\mu\vc{D}(\vc{x})^{-1}\vc{1}]$ \newline
		for $\vc{Q}=\vc{A}\vc{D}(\vc{s})^{-1}\vc{D}(\vc{x})\vc{A}\tran+\vc{D}(\vc{w})^{-1}\vc{D}(\vc{r})$
		\label{eq:ipm-theory:deltaw}
		\STATE $\Delta \vc{x} \leftarrow \vc{D}(\vc{s})^{-1}\vc{D}(\vc{x})[\vc{A}\tran\Delta \vc{w} - \vc{c} + \vc{A}\tran\vc{w} + \sigma\mu\vc{D}(\vc{x})^{-1}\vc{1}]$ \label{eq:ipm-theory:deltax}
		\STATE $\Delta \vc{s} \leftarrow \sigma\mu\vc{D}(\vc{x})^{-1}\vc{1} - \vc{s} - \vc{D}(\vc{x})^{-1}\vc{D}(\vc{s})\Delta \vc{x}$ \label{eq:ipm-theory:deltas}
		\STATE $\Delta \vc{r} \leftarrow \sigma\mu\vc{D}(\vc{w})^{-1}\vc{1} - \vc{r} - \vc{D}(\vc{w})^{-1}\vc{D}(\vc{r})\Delta \vc{w}$ \label{eq:ipm-theory:deltar}
		% \STATE Solve the Newton system \eqref{eq:ipm-newton} and obtain $(\Delta \vc{x}, \Delta \vc{w}, \Delta \vc{s}, \Delta \vc{r})$. \label{eq:ipm-theory-2}
		\STATE Compute the largest $\alpha\in(0,1)$ such that
			{\footnotesize\begin{align*}
					 & \hspace{-10pt}\min_{i,j}\{(\vc{x}+\alpha\Delta \vc{x})_i(\vc{s}+\alpha\Delta \vc{s})_i,
					(\vc{w}+\alpha\Delta \vc{w})_j(\vc{r}+\alpha\Delta \vc{r})_j\}                                                                                                         \\
					 & \hspace{-10pt}\geq \gamma\frac{(\vc{x}+\alpha\Delta \vc{x})\tran (\vc{s}+\alpha\Delta \vc{s})+(\vc{w}+\alpha\Delta \vc{w})\tran (\vc{r}+\alpha\Delta \vc{r})}{n+m}.
				\end{align*}}
		\vspace{-5pt}
		\label{eq:ipm-theory:alpha}
		\STATE Update $(\vc{x},\vc{w},\vc{s},\vc{r}) \mathrel{+}= \alpha(\Delta \vc{x}, \Delta \vc{w}, \Delta \vc{s}, \Delta \vc{r})$
		\label{eq:ipm-theory:update}
		\UNTIL{convergence of $(\vc{x},\vc{w},\vc{s},\vc{r})$}
		\RETURN the point $\vc{x}$, which solves \ref{eq:def-lp}.
	\end{algorithmic}
\end{algorithm}

\begin{algorithm}
	\caption{Practical IPM for LPs}
	\label{alg:ipm-practice}
	\begin{algorithmic}[1]
		\REQUIRE An LP instance $(\vc{A},\vc{b},\vc{c})$, a barrier reduction hyperparameter $\sigma \in (0,1)$ and initial values $(\vc{x}_0, \vc{w}_0, \vc{s}_0, \vc{r}_0, \mu_0)$ such that $(\vc{x}_0, \vc{w}_0, \vc{s}_0, \vc{r}_0)>0$ and $\mu_0=(\vc{x}_0\tran \vc{s}_0+ \vc{w}_0\tran\vc{r}_0)/(n+m)$.

		\REPEAT
		\STATE Compute $\Delta \vc{w}$ by solving the linear system \newline
		$\vc{Q}\Delta \vc{w} = \vc{b}-\vc{Ax}+\sigma\mu\vc{D}(\vc{w})^{-1}\vc{1}$ \newline
		\hspace*{2.5em} $+ \vc{A}\vc{D}(\vc{s})^{-1}\vc{D}(\vc{x})[\vc{c}-\vc{A}\tran \vc{w}-\sigma\mu\vc{D}(\vc{x})^{-1}\vc{1}]$ \newline
		for $\vc{Q}=\vc{A}\vc{D}(\vc{s})^{-1}\vc{D}(\vc{x})\vc{A}\tran+\vc{D}(\vc{w})^{-1}\vc{D}(\vc{r})$.
		\label{eq:ipm-practice:deltaw}
		\STATE $\Delta \vc{x} \leftarrow \vc{D}(\vc{s})^{-1}\vc{D}(\vc{x})[\vc{A}\tran\Delta \vc{w} - \vc{c} + \vc{A}\tran\vc{w} + \sigma\mu\vc{D}(\vc{x})^{-1}\vc{1}]$
		\STATE $\Delta \vc{s} \leftarrow \sigma\mu\vc{D}(\vc{x})^{-1}\vc{1} - \vc{s} - \vc{D}(\vc{x})^{-1}\vc{D}(\vc{s})\Delta \vc{x}$
		\STATE $\Delta \vc{r} \leftarrow \sigma\mu\vc{D}(\vc{w})^{-1}\vc{1} - \vc{r} - \vc{D}(\vc{w})^{-1}\vc{D}(\vc{r})\Delta \vc{w}$
		% \STATE Solve the Newton system \eqref{eq:ipm-newton} and obtain $(\Delta \vc{x}, \Delta \vc{w}, \Delta \vc{s}, \Delta \vc{r})$. \label{eq:ipm-theory-2}
		\label{eq:ipm-practice:deltar}
		\STATE Find the largest $\alpha>0$ such that
		\vspace{-4pt}
		{\footnotesize\begin{align*}
				 & \min_{i,j}\{(\vc{x}+\alpha\Delta \vc{x})_i(\vc{s}+\alpha\Delta \vc{s})_i, \\
				 & \hspace{20pt}
				(\vc{w}+\alpha\Delta \vc{w})_j(\vc{r}+\alpha\Delta \vc{r})_j\} \geq 0
			\end{align*}}
		\label{eq:ipm-practice:alpha}
		\vspace{-18pt}
		\STATE Update $(\vc{x}, \vc{w}, \vc{s}, \vc{r}) \mathrel{+}= 0.99\alpha(\Delta \vc{x}, \Delta \vc{w}, \Delta \vc{s}, \Delta \vc{r})$
		\label{eq:ipm-practice:update}
		\vspace{-0pt}
		\STATE $\mu\leftarrow \sigma\mu$
		\label{eq:ipm-practice:mu}
		\UNTIL{convergence of $(\vc{x}, \vc{w}, \vc{s}, \vc{r})$}
		\RETURN the point $\vc{x}$, which solves \ref{eq:def-lp}.
	\end{algorithmic}
\end{algorithm}
Algorithm \ref{alg:ipm-theory} is guaranteed to converge to an $\epsilon$-accurate solution in $\cO((n+m)\log(1/\epsilon))$ iterations \cite[Theorem 14.3]{nocedal2006numerical}, that is, to a number of iterations proportional to the problem size. Algorithm \ref{alg:ipm-practice}, in contrast, does not come with any theoretical guarantees but is typical of practical IPM algorithms, which tend to converge in an almost constant number of iterations---usually within 30-40 iterations, irrespective of problem size \citep{gondzio2012, colombo2008further}.

\xhdr{Message-passing graph neural networks}
\label{sec:gnn}
Intuitively, MPNNs learn node features or attributes, i.e., a $d$-element real-valued vector, representing each node in a graph by aggregating information from neighboring nodes. Let $\mG=(G,\vc{L})$ be an attributed graph, following, \citet{Gil+2017} and \citet{Sca+2009}, in each layer, $t > 0$,  we update node attributes or features
\begin{equation*}\label{def:gnn}
	\vc{h}_{v}^\tup{t} \coloneqq
	\UPD^\tup{t}\Bigl(\vc{h}_{v}^\tup{t-1},\mathsf{MSG}^\tup{t} \bigl(\oms \vc{h}_{u}^\tup{t-1}
	\mid u\in N(v) \cms \bigr)\Bigr),
\end{equation*}
and $\vc{h}_{v}^\tup{0} \coloneqq \vc{L}_v$. Here,
the \new{message function} $\mathsf{MSG}^\tup{t}$ is a parameterized function, e.g., a neural network, mapping the multiset of neighboring node features to a single vectorial representation. We can easily adapt a message function to incorporate possible edge features or weights. Similarly, the \new{update function} $\mathsf{UPD}^\tup{t}$ is a parameterized function mapping the previous node features, and the output of $\mathsf{MSG}^\tup{t}$ to a single vectorial representation.

To adapt the parameters of the above functions, they are optimized end-to-end, usually through a variant of stochastic gradient descent, e.g.,~\citet{Kin+2015}, together with the parameters of a neural network used for classification or regression. In the following, we define a \new{message-passing step} as the application of a message and update function.

\section{\uppercase{Simulating IPM}s \uppercase{via MPNN}s}\label{sec:ipm-mpnn}
We now show that there exist MPNNs, with specific architecture and choices of parameters, such that~\cref{alg:ipm-theory,alg:ipm-practice} can be interpreted as inference over these MPNNs for a specific tripartite graph encoding the LP as input.

\xhdr{Representing LPs as graphs}
Let $I = (\vc{A},\vc{b},\vc{c})$ be an instance of LP. Similar to the setting of \citet{Ding2019}, we model the instances with an (undirected) \new{weighted tripartite graph} $G(I) \coloneqq (V(I), C(I), \{ o \},  E(I)_{\text{vc}}, E(I)_{\text{vo}}, E(I)_{\text{co}})$. Here, the node set $V(I) \coloneqq \{ v_i \mid i \in [n] \}$ represents the variables of $I$, the node set $C(I) \coloneqq \{ c_i \mid i \in [m] \}$ represents the constraints of $I$, and the node $o$ represents the objective. Further, the first edge set $E(I)_{\text{vc}}$ models the variable-constraint interaction, i.e., $E(I)_{\text{vc}} \coloneq \{ (v_i,c_j) \mid {A}_{ij} \neq 0 \}$, where each such edge $(v_i,c_j)$ is annotated with the weight ${A}_{ij}$. Further,  the objective node $o$ is connected to all other nodes in the graphs, i.e., $E(I)_{\text{vo}} \coloneqq \{ (o,v_i)  \mid v_i \in V(I) \}$ and $E(I)_{\text{co}} \coloneqq \{ (o,c_i)  \mid c_i \in C(I) \}$. Each edge $(o,v_i) \in E(I)_{\text{vo}}$ is annotated with the weight $c_i$. Similarly, each edge $(o,c_i) \in E(I)_{\text{co}}$ is annotated with the weight $b_i$. The resulting graph is illustrated in~\Cref{fig:lp}.

\xhdr{Theoretical results}
We now state the main results of this paper. To describe them, first notice that Algorithms \ref{alg:ipm-theory} and \ref{alg:ipm-practice} operate by taking an initial point $(\vc{x}_0, \vc{w}_0, \vc{s}_0, \vc{r}_0)$ and a duality measure $\mu_0$, and updating them after every iteration, yielding a sequence of points $(\vc{x}_t, \vc{w}_t, \vc{s}_t, \vc{r}_t)$ and duality measure $\mu_t$ for iterations $t>0$. The following result shows that \Cref{alg:ipm-theory} can be reproduced by a specific MPNN, in the sense that a fixed-depth MPNN can reproduce each of its iterations.
\begin{theorem}\label{thm:main}
	There exists an MPNN $f_{\mathsf{MPNN}, \mathsf{IPM1}}$ composed of $\cO(m)$ message-passing steps that reproduces an iteration of \Cref{alg:ipm-theory}, in the sense that for any LP instance $I=(\vc{A},\vc{b},\vc{c})$ and any iteration step $t\geq0$, $f_{\mathsf{MPNN}, \mathsf{IPM1}}$ maps the graph $G(I)$ carrying $[\vc{x}_t, \vc{s}_t]$ on the variable nodes and $[\vc{w}_t, \vc{r}_t]$ on the constraint nodes to the same graph $G(I)$ carrying  $[\vc{x}_{t+1}, \vc{s}_{t+1}]$ on the variable nodes and $[\vc{w}_{t+1}, \vc{r}_{t+1}]$ on the constraint nodes.
\end{theorem}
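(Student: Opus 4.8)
The plan is to assign the bookkeeping of \Cref{alg:ipm-theory} to the three node types of $G(I)$ and to realize each algebraic operation by a bounded number of message-passing steps, with the single exception of the linear solve in Line~\ref{eq:ipm-theory:deltaw}, which is where the $\cO(m)$ bound originates. Throughout, variable node $v_i$ carries $(x_i,s_i)$ and later $(\Delta x_i,\Delta s_i)$, constraint node $c_j$ carries $(w_j,r_j)$ and later $(\Delta w_j,\Delta r_j)$, and the objective node $o$ serves as a global accumulator and broadcaster for the scalars $\mu$ and $\alpha$ and for the various inner products, exploiting that $o$ is adjacent to every other node. Two primitive operations recur: (i) a matrix-vector product against $\vc{A}$ or $\vc{A}\tran$, realized in a single message-passing step because the edges of $E(I)_{\text{vc}}$ carry exactly the corresponding entries of $\vc{A}$, so that a variable node aggregating the weighted messages of its constraint neighbors computes an entry of $\vc{A}\tran\vc{p}$, and symmetrically a constraint node computes an entry of $\vc{A}\vc{q}$; and (ii) a global reduction of a sum or a minimum over all nodes, realized by one aggregation into $o$ followed by one broadcast out of $o$.

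With these primitives, the non-solve parts are constant-depth blocks. The duality measure $\mu=(\vc{x}\tran\vc{s}+\vc{w}\tran\vc{r})/(n+m)$ (Line~\ref{eq:ipm-theory:mu}) is a single global reduction of the node-local products $x_is_i$ and $w_jr_j$. Once $\Delta\vc{w}$ is known, $\Delta\vc{x}$ (Line~\ref{eq:ipm-theory:deltax}) follows from~\eqref{eq:ipm-deltax} using one $\vc{A}\tran\Delta\vc{w}$ message step together with purely node-local diagonal scalings by $\vc{D}(\vc{s})^{-1}\vc{D}(\vc{x})$; then $\Delta\vc{s}$ and $\Delta\vc{r}$ (Lines~\ref{eq:ipm-theory:deltas}--\ref{eq:ipm-theory:deltar}) are entirely node-local by~\eqref{eq:ipm-deltas}--\eqref{eq:ipm-deltar}, and the final update (Line~\ref{eq:ipm-theory:update}) is node-local as well.

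The crux is Line~\ref{eq:ipm-theory:deltaw}: solving $\vc{Q}\Delta\vc{w}=\vc{g}$ for the $m\times m$ matrix $\vc{Q}=\vc{A}\vc{D}(\vc{s})^{-1}\vc{D}(\vc{x})\vc{A}\tran+\vc{D}(\vc{w})^{-1}\vc{D}(\vc{r})$, which lives on the constraint nodes. Since $\vc{x},\vc{s},\vc{w},\vc{r}>\vc{0}$ are maintained throughout the algorithm, $\vc{Q}$ is symmetric positive definite, so I will simulate the conjugate gradient (CG) method, which in exact arithmetic terminates with the exact solution after at most $m$ iterations. The right-hand side $\vc{g}$ is assembled once with a constant number of message steps (one $\vc{A}\tran\vc{w}$, one product against $\vc{A}$, and local scalings). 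Each CG iteration then requires (a) one matrix-vector product $\vc{Q}\vc{p}$, realized in two message-passing steps---$\vc{A}\tran\vc{p}$ into the variables, diagonal scaling there, the product against $\vc{A}$ back into the constraints, and the local addition of $\vc{D}(\vc{w})^{-1}\vc{D}(\vc{r})\vc{p}$; (b) the two inner products $\vc{p}\tran\vc{Q}\vc{p}$ and the residual norm, each a single global reduction through $o$; and (c) node-local updates of the iterate, residual, and search direction. Hence one CG iteration costs $\cO(1)$ message-passing steps, the whole solve costs $\cO(m)$, and this dominates, giving the claimed bound. This is the step I expect to be the main obstacle: one must verify that the product against $\vc{Q}$ factors cleanly through the variable nodes, that the CG inner products route correctly through $o$, and that exact $m$-step termination is what simultaneously certifies correctness and the $\cO(m)$ depth.

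It remains to implement the line search of Line~\ref{eq:ipm-theory:alpha}. For each node the condition is a quadratic inequality in $\alpha$, for instance $(x_i+\alpha\Delta x_i)(s_i+\alpha\Delta s_i)\ge\gamma\bar\mu(\alpha)$, where $\bar\mu(\alpha)=[(\vc{x}+\alpha\Delta\vc{x})\tran(\vc{s}+\alpha\Delta\vc{s})+(\vc{w}+\alpha\Delta\vc{w})\tran(\vc{r}+\alpha\Delta\vc{r})]/(n+m)$ is a single scalar quadratic in $\alpha$ whose three coefficients are global reductions of node-local products, computed once through $o$ and broadcast back. With those coefficients available at every node, each node solves its own quadratic locally to obtain the largest admissible step, and the final step size $\alpha=\min\{1,\min_{i,j}\{\alpha_i,\alpha_j\}\}$ is one further global minimum reduction through $o$; all of this is constant depth. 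Because the message and update functions are unconstrained parameterized maps, they may implement the requisite divisions and quadratic root extractions exactly, so the composition of the above blocks is an MPNN with $\cO(m)$ message-passing steps that maps $G(I)$ carrying $[\vc{x}_t,\vc{s}_t]$ and $[\vc{w}_t,\vc{r}_t]$ to $G(I)$ carrying $[\vc{x}_{t+1},\vc{s}_{t+1}]$ and $[\vc{w}_{t+1},\vc{r}_{t+1}]$, reproducing one iteration of \Cref{alg:ipm-theory} exactly.
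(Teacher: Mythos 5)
Your proposal is correct and follows essentially the same route as the paper's proof: the paper likewise assembles the non-solve steps from constant-depth blocks of message-passing (matrix-vector products against $\vc{A}$ and $\vc{A}\tran$ along the weighted variable-constraint edges, global reductions and broadcasts through the objective node, and per-node quadratic root-finding for the line search), and handles Line~\ref{eq:ipm-theory:deltaw} by simulating conjugate gradient on the symmetric positive definite matrix $\vc{Q}$ for $m$ iterations at $\cO(1)$ message-passing steps each, which is exactly its Lemma~\ref{thm:cg-is-gnn} and the source of the $\cO(m)$ bound. The only differences are bookkeeping-level (the paper counts $9$ message-passing steps per CG iteration and $23$ for the remaining steps, where you give coarser constant-depth counts), so nothing substantive separates the two arguments.
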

This implies, by composing several instances of $f_{\mathsf{MPNN}, \mathsf{IPM1}}$, that \Cref{alg:ipm-theory} can be simulated by an MPNN with a number of layers proportional to the number of iterations taken by the algorithm.
\begin{proposition}\label{thm:side}
	There exists an MPNN $f_{\mathsf{MPNN}, \mathsf{IPM2}}$ composed of $\cO(m)$ message-passing steps that reproduces each iteration of \Cref{alg:ipm-practice}, in the sense that for any LP instance $I=(\vc{A},\vc{b},\vc{c})$ and any iteration step $t\geq0$, $f_{\mathsf{MPNN}, \mathsf{IPM2}}$ maps the graph $G(I)$ carrying $[\vc{x}_t, \vc{s}_t]$ on the variable nodes, $[\vc{w}_t, \vc{r}_t]$ on the constraint nodes and $[\mu_t]$ on the objective node to the same graph $G(I)$ carrying  $[\vc{x}_{t+1}, \vc{s}_{t+1}]$ on the variable nodes, $[\vc{w}_{t+1}, \vc{r}_{t+1}]$ on the constraint nodes and $[\mu_{t+1}]$ on the objective node.
\end{proposition}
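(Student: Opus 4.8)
The plan is to mirror the proof of \Cref{thm:main} as closely as possible, because \Cref{alg:ipm-theory,alg:ipm-practice} share an identical inner loop: the Newton-system equations \eqref{eq:ipm-deltaw}, \eqref{eq:ipm-deltax}, \eqref{eq:ipm-deltas}, \eqref{eq:ipm-deltar} are the same in both algorithms, so the bulk of the construction carries over verbatim. The only two places where the algorithms diverge are (i) the treatment of the barrier parameter $\mu$ and (ii) the step-length rule for $\alpha$, and in both cases the practical algorithm is in fact \emph{easier} to emulate. First I would fix the invariant: variable nodes $v_i$ carry the pair $[x_i, s_i]$, constraint nodes $c_j$ carry $[w_j, r_j]$, and the objective node $o$ carries the scalar $\mu$, while the edge weights of $G(I)$ store the entries $A_{ij}$, $b_j$, and $c_i$. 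I will then exhibit a fixed sequence of message-passing steps advancing this state by exactly one iteration.

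The local pieces are routine. Each of $\Delta\vc{x}$, $\Delta\vc{s}$, $\Delta\vc{r}$ in \eqref{eq:ipm-deltas}--\eqref{eq:ipm-deltax} is either node-local or requires a single one-hop aggregation of $A_{ij}$-weighted messages between variable and constraint nodes (for the $\vc{A}\tran\Delta\vc{w}$ and $\vc{A}\tran\vc{w}$ terms), together with a broadcast of $\sigma\mu$ from $o$; all of these are realizable by $\cO(1)$ message-passing steps whose $\UPD$ and $\mathsf{MSG}$ functions implement the requisite products, sums, and reciprocals (the reciprocals are well-defined because interiority guarantees $\vc{x},\vc{s},\vc{w},\vc{r}>\vc{0}$). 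Crucially, whereas the proof of \Cref{thm:main} must recompute $\mu=(\vc{x}\tran\vc{s}+\vc{w}\tran\vc{r})/(n+m)$ by aggregating $x_is_i$ and $w_jr_j$ to $o$, here $\mu$ is simply read off the objective node and, at the end of the iteration, overwritten by $\sigma\mu$ via a single scalar multiplication at $o$. This removes one global aggregation relative to \Cref{thm:main}.

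The one genuinely nontrivial step---and the source of the $\cO(m)$ bound---is solving the $m\times m$ linear system $\vc{Q}\Delta\vc{w}=\text{RHS}$ of \eqref{eq:ipm-deltaw}. The key observation is that $\vc{Q}=\vc{AD}(\vc{s})^{-1}\vc{D}(\vc{x})\vc{A}\tran+\vc{D}(\vc{w})^{-1}\vc{D}(\vc{r})$ is symmetric positive definite (the first term is positive semidefinite and the diagonal term is strictly positive, again by interiority), so I would emulate the conjugate-gradient method, which terminates in at most $m$ iterations in exact arithmetic. A single matrix-vector product $\vc{Q}\vc{p}$ factors as $\vc{A}\tran\vc{p}$ (one constraint-to-variable message), a node-local scaling by $x_i/s_i$ at each variable, a variable-to-constraint message implementing $\vc{A}(\cdot)$, and a node-local addition of $(r_j/w_j)p_j$ at each constraint---hence two message-passing rounds. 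The conjugate-gradient iterates, residuals, and search directions are stored as $\cO(1)$ extra scalars on the constraint nodes, and the scalar inner products needed to form the step lengths are computed by aggregating to $o$ and broadcasting back in two further rounds. Thus each conjugate-gradient iteration costs $\cO(1)$ message-passing steps, and $m$ of them yield the claimed $\cO(m)$ budget.

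Finally, the step length is handled in the spirit of \Cref{thm:main} but with the simpler positivity rule of \Cref{alg:ipm-practice}: I would form the quantities $(\vc{x}+\alpha\Delta\vc{x})_i(\vc{s}+\alpha\Delta\vc{s})_i$ and $(\vc{w}+\alpha\Delta\vc{w})_j(\vc{r}+\alpha\Delta\vc{r})_j$ (each a node-local quadratic in $\alpha$), take their minimum at $o$ to recover the largest feasible $\alpha$, then broadcast $0.99\alpha$ back and apply the additive update node-locally, together with the $\mu\leftarrow\sigma\mu$ step above. Composing these blocks gives a single MPNN of $\cO(m)$ message-passing steps whose action on $G(I)$ carrying $[\vc{x}_t,\vc{s}_t]$, $[\vc{w}_t,\vc{r}_t]$, $[\mu_t]$ is exactly the map to $[\vc{x}_{t+1},\vc{s}_{t+1}]$, $[\vc{w}_{t+1},\vc{r}_{t+1}]$, $[\mu_{t+1}]$. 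The main obstacle is the linear solve; everything else is bookkeeping, and the practical algorithm's explicit tracking of $\mu$ on the objective node makes the construction marginally lighter than that of \Cref{thm:main}.
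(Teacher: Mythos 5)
Your proposal is correct and follows essentially the same route as the paper's proof: reuse the construction of \Cref{thm:main} for the shared Newton-system steps (with the conjugate-gradient emulation of the linear solve on the tripartite graph supplying the $\cO(m)$ bound), then handle the simpler positivity-based step length via node-local quadratics and a min-aggregation at the objective node, the $0.99\alpha$ update by broadcast plus local operations, and $\mu\leftarrow\sigma\mu$ as a local operation on the objective node. The paper's proof is organized identically, so there is nothing substantive to add.
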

Similarly, this implies, by composing several instances of $f_{\mathsf{MPNN}, \mathsf{IPM2}}$, that \Cref{alg:ipm-practice} can be simulated by an MPNN with a number of layers proportional to the number of iterations taken by the algorithm.

\xhdr{Implication of the theoretical results} \cref{thm:main,thm:side} show that MPNNs are, in principle, capable of simulating modern IPMs. That is, they are capable of solving LPs to optimality. Hence, our findings shed light on the recent success of MPNN-based neural architectures by~\cite{Gas+2019} and similar approaches, which use MPNNs to mimic strong branching within the branch-and-bound framework for solving mixed-integer linear optimization problems. Moreover, in the following section, we derive MPNN architectures that act as lightweight proxies for solving LPs while being able to adapt to the given problem instance distributions. We verify their effectiveness empirically on real-world LP instances stemming from relaxing mixed-integer linear formulations of well-known combinatorial optimization problems.

\section{\uppercase{IPM-MPNN}s: \uppercase{MPNN}s \uppercase{for LP}s}\label{sec:ipm-mpnns}

Inspired by the theoretical alignment of IPMs and MPNNs derived above, we outline our IPM-MPNN framework, allowing for solving LP instances while adapting to a given problem instance distribution.

Given an LP instance $I$, we now outline an asynchronous MPNN operating on the tripartite graph $G(I)$; see~\cref{sec:ipm-mpnn}. Let $\vc{h}_c^{(t)} \in \Rb^d$, $d > 0$, be the node features of a constraint node $c \in C(I)$ at iteration $t > 0$, and let  $\vc{h}_v^{(t)} \in \Rb^d$ and $\vc{h}_o^{(t)} \in \Rb^d$ be the node features of a variable node $v \in V(I)$ and the objective node $o$ at iteration $t$, respectively. Moreover, let $\vc{e}_{co},\vc{e}_{vc},\vc{e}_{vo}$ denote the edge weights.
Initially, at $t=0$, we set the node features by applying a linear mapping to the raw node features $\vc{x}_v$, $\vc{x}_c$ or $\vc{x}_o$, extracted from the instance $I$; see~\cref{sec:experiments} for details.

All three node types are updated in three separate update passes. In the first pass, we update the embeddings of constraint nodes from the embeddings of the variable nodes and of the objective node. That is, let $c \in C(I)$ be a constraint node and let $t > 0$, then
\begin{equation*}
	\begin{aligned}
		\vc{h}_c^{(t)} \coloneqq\, & \textsf{UPD}^{(t)}_{\text{c}}\Bigl[ \vc{h}_c^{(t-1)}, \textsf{MSG}^{(t)}_{\text{o} \rightarrow \text{c}}\left(\vc{h}_{o}^{(t-1)}, \vc{e}_{oc} \right),             \\
		                           & \textsf{MSG}^{(t)}_{\text{v} \rightarrow \text{c}}\left(\{\!\!\{ (\vc{h}_v^{(t-1)}, \vc{e}_{vc}) \mid v \in {N}\left(c \right) \cap V(I) \}\!\!\}  \right) \Bigr].
	\end{aligned}
\end{equation*}
Here, the parameterized message function $\textsf{MSG}^{(t)}_{\text{v} \rightarrow \text{c}}$ maps a multiset of vectors, i.e., variable node features and corresponding edge features $\vc{e}_{{vc}}$, to a vector in $\Rb^d$. Similarly, the parameterized function $\textsf{MSG}_{\text{o} \rightarrow \text{c}}$ maps the current node features of the objective node and edge features $\vc{e}_{{oc}}$ to a vector in $\Rb^d$. Finally, the parameterized function $\textsf{UPD}^{(t)}_{\text{c}}$ maps the constraint node's previous features, the outputs of $\textsf{MSG}^{(t)}_{\text{o} \rightarrow \text{c}}$ and $\textsf{MSG}^{(t)}_{\text{v} \rightarrow \text{c}}$ to vector in $\Rb^d$.

Next, similarly to the above, we update the objective node's features depending on variable and constraint node features,
\begin{equation*}
	\begin{aligned}
		\vc{h}_{\text{o}}^{(t)} \coloneqq\, & \textsf{UPD}^{(t)}_{\text{o}}\Bigl[ \vc{h}_o^{(t-1)}, \textsf{MSG}^{(t)}_{\text{c} \rightarrow \text{o}}\left(\{\!\!\{ \vc{h}_c^{(t)}, \vc{e}_{co} \mid c \in C(I) \}\!\!\}  \right), \\
		                                    & \textsf{MSG}^{(t)}_{\text{v} \rightarrow \text{o}}\left( \{\!\!\{ \vc{h}_v^{(t-1)}, \vc{e}_{vo} \mid v \in V(I) \}\!\!\}\right)\Bigr].
	\end{aligned}
\end{equation*}

Finally, analogously to the update of the constraint nodes' features, we update the representation of a variable  node $v \in V(I)$ from the constraints nodes and objective node,
\begin{equation*}
	\begin{aligned}
		\vc{h}_v^{(t)} \coloneqq\, & \textsf{UPD}^{(t)}_{\text{v}}\Bigl[ \vc{h}_v^{(t-1)}, \textsf{MSG}^{(t)}_{\text{o} \rightarrow \text{v}}\left(\vc{h}_o^{(t)}, \vc{e}_{ov} \right),          \\
		                           & \textsf{MSG}^{(t)}_{\text{c} \rightarrow \text{v}}\left( \{\!\!\{ \vc{h}_c^{(t)}, \vc{e}_{cv} \mid c \in N \left(v \right) \cap V(C) \}\!\!\}\right)\Bigr].
	\end{aligned}
\end{equation*}
The entire process is executed asynchronously, in that the nodes updated later incorporate the most recent features updates from preceding nodes. We map each variable node feature $\vc{h}_v^{(t)}$ to $\mathsf{MLP}(\vc{h}_v^{(t)}) \in \Rb,$ where $\mathsf{MLP}$ is a multi-layer perceptron, and concatenate the resulting real numbers over all variable nodes column-wise, resulting in the final prediction $\vc{z}^{(t)} \in \Rb^n$.

In the experiments in \Cref{sec:experiments}, we probe various message-passing layers to express the various message and update functions. Concretely, we leverage the GCN \citep{Kip+2017}, GIN \citep{Xu+2018b}, and GEN \citep{li2020deepergcn} MPNN layers, respectively; see \cref{sec:conv_form} for details.

We train the above IPM-MPNN architecture, i.e., adapt its parameters, in a supervised fashion. Below, we outline the three components constituting our training loss function. To that, let $(\vc{A}, \vc{b}, \vc{c})$ be an LP instance.

\xhdr{Variable supervision} As discussed above, our IPM-MPNN aims to simulate the solving steps provided by a standard IPM. Thus, aligned with~\cref{thm:main,thm:side}, a perfectly parameterized MPNN is expected to follow the steps without deviation for all $T$ iterations. We maintain the intermediate outputs $\vc{z}^{(t)} \in \Rb^n$ of each MPNN layer and calculate the mean squared error (MSE) loss between every pair of the expert solution $\vc{y}^{(t)} \in \Rb^n$ and MPNN prediction $\vc{z}^{(t)}$. Moreover, we introduce a step decay factor $\alpha \in [0, 1]$ so that early steps play a less important role, resulting in the following loss function,
\begin{equation}
	\label{eq:main_loss}
	\mathcal{L}_{\text{var}} \coloneqq \frac{1}{N} \sum_{i=1}^N \sum_{t=1}^T \alpha^{T-t} {\lVert \vc{y}_{i}^{(t)} - \vc{z}_{i}^{(t)} \rVert}_2^2,
\end{equation}
where $N$ denotes the number of training samples.

\xhdr{Objective supervision} We use regularization on the prediction regarding the ground-truth objective values at every step. Empirically, this regularization term helps with convergence and helps finding more feasible solutions that minimize the objective in case the LP instance has multiple solutions. Note that we do not predict the objective directly with our MPNNs but calculate it via $\vc{c}\tran \vc{z}^{(t)}$ instead. Suppose the ground-truth values are given by $\vc{c}\tran \vc{y}^{(t)}$, we have
\begin{equation}
	\mathcal{L}_{\text{obj}} \coloneqq \frac{1}{N} \sum_{i=1}^N \sum_{t=1}^T \alpha^{T-t} \left[ \vc{c}\tran \big(\vc{y}_{i}^{(t)} - \vc{z}_{i}^{(t)} \big) \right]^2.
\end{equation}

\xhdr{Constraint supervision} Finally, we aim for the IPM-MPNN to predict an optimal solution regarding the objective value while satisfying all the constraints. To that, we introduce a regularization penalizing constraint violations, i.e.,
\begin{equation}
	\mathcal{L}_{\text{cons}} \coloneqq \frac{1}{N} \sum_{i=1}^N \sum_{t=1}^T \alpha^{T-t} {\lVert \mathsf{ReLU}(\vc{A}_{i} \vc{z}_i^{(t)} - \vc{b}_{i}) \rVert}_2^2.
\end{equation}

Finally, we combine the above three loss terms into the loss function
\begin{equation}
	\label{eq:full_loss}
	\mathcal{L} \coloneqq w_{\text{var}} \mathcal{L}_{\text{var}} + w_{\text{obj}} \mathcal{L}_{\text{obj}} + w_{\text{cons}}\mathcal{L}_{\text{cons}},
\end{equation}
where we treat $w_{\text{var}}, w_{\text{obj}},$ and $w_{\text{cons}} > 0$ as hyperparamters.

At test time, given an LP instance $I$, we construct the tripartite graph $G(I)$ as outlined above and use the trained  MPNN to predict the variables' values.

\begin{table*}[t]
	\caption{Results of our proposed IPM-MPNNs (\cmark) versus bipartite representation ablations (\xmark). We report the relative objective gap and the constraint violation, averaged over all three runs. We print the best results per target in bold.}
	\label{tab: main_table}
	\begin{center}
		\resizebox{0.9\textwidth}{!}{
			\begin{tabular}{ccccccc|cccc}
				\toprule
				                                                                              & {\multirow{2}{*}{\scriptsize Tri.}}   & {\multirow{2}{*}{\scriptsize MPNN}}  & \multicolumn{4}{c}{\textbf{Small instances}} & \multicolumn{4}{c}{\textbf{Large instances}}                                                                                                                                                                                                                                           \\

				                                                                              &                                       &                                      & Setcover                                     & Indset                                       & Cauc                                 & Fac                                  & Setcover                            & Indset                                 & Cauc                                & Fac                                  \\

				\midrule
				{\multirow{6}{*}{\rotatebox[origin=c]{90}{\scriptsize Objective gap [\%]}}}   & {\multirow{3}{*}{\scriptsize \cmark}}

				                                                                              & GEN                                   & \textbf{0.319\scriptsize$\pm$0.020}  & 0.119\scriptsize$\pm$0.003                   & \textbf{0.612\scriptsize$\pm$0.049}          & \textbf{0.549\scriptsize$\pm$0.112}  & 0.629\scriptsize$\pm$0.086           & 0.158\scriptsize$\pm$0.035          & \textbf{0.306\scriptsize$\pm$0.047}    & \textbf{0.747\scriptsize$\pm$0.083}                                        \\
				                                                                              &                                       & GCN                                  & 0.418\scriptsize$\pm$0.008                   & \textbf{0.103\scriptsize$\pm$0.006}          & 0.682\scriptsize$\pm$0.029           & 0.578\scriptsize$\pm$0.015           & \textbf{0.420\scriptsize$\pm$0.047} & \textbf{0.094\scriptsize$\pm$0.005}    & 0.407\scriptsize$\pm$0.038          & 0.914\scriptsize$\pm$0.141           \\
				                                                                              &                                       & GIN                                  & 0.478\scriptsize$\pm$0.038                   & 0.146\scriptsize$\pm$0.011                   & 0.632\scriptsize$\pm$0.036           & 0.810\scriptsize$\pm$0.221           & 0.711\scriptsize$\pm$0.115          & 0.126\scriptsize$\pm$0.021             & 0.378\scriptsize$\pm$0.052          & 0.911\scriptsize$\pm$0.132           \\
				\cmidrule{2-11}

				                                                                              & {\multirow{3}{*}{\scriptsize \xmark}}
				                                                                              & GEN                                   & 8.310\scriptsize$\pm$1.269           & 0.735\scriptsize$\pm$0.032                   & 1.417\scriptsize$\pm$0.009                   & 2.976\scriptsize$\pm$0.013           & 15.170\scriptsize$\pm$6.844          & 0.320\scriptsize$\pm$0.008          & 0.851\scriptsize$\pm$0.122             & 2.531\scriptsize$\pm$0.025                                                 \\
				                                                                              &                                       & GCN                                  & 5.523\scriptsize$\pm$0.133                   & 0.639\scriptsize$\pm$0.009                   & 1.394\scriptsize$\pm$0.081           & 3.031\scriptsize$\pm$0.059           & 6.092\scriptsize$\pm$0.456          & 0.298\scriptsize$\pm$0.009             & 0.766\scriptsize$\pm$0.093          & 2.535\scriptsize$\pm$0.034           \\
				                                                                              &                                       & GIN                                  & 5.592\scriptsize$\pm$0.179                   & 0.634\scriptsize$\pm$0.021                   & 1.202\scriptsize$\pm$0.016           & 2.996\scriptsize$\pm$0.031           & 5.835\scriptsize$\pm$1.917          & 0.290\scriptsize$\pm$0.005             & 0.810\scriptsize$\pm$0.140          & 2.660\scriptsize$\pm$0.062           \\

				\midrule

				{\multirow{6}{*}{\rotatebox[origin=c]{90}{\scriptsize Constraint violation}}} & {\multirow{3}{*}{\scriptsize \cmark}}
				                                                                              & GEN                                   & \textbf{0.002\scriptsize$\pm$0.0002} & 0.0006\scriptsize$\pm$0.00003                & 0.003\scriptsize$\pm$0.0007                  & 0.002\scriptsize$\pm$0.001           & 0.009\scriptsize$\pm$0.001           & 0.0015\scriptsize$\pm$0.0003        & \textbf{0.0004\scriptsize $\pm$0.0002} & 0.002\scriptsize$\pm$0.001                                                 \\
				                                                                              &                                       & GCN                                  & 0.002\scriptsize$\pm$0.001                   & \textbf{0.0003\scriptsize$\pm$0.0001}        & 0.002\scriptsize$\pm$0.00007         & \textbf{0.002\scriptsize$\pm$0.0002} & 0.009\scriptsize$\pm$0.001          & \textbf{0.0005\scriptsize$\pm$0.00004} & 0.001\scriptsize$\pm$0.0005         & \textbf{0.001\scriptsize$\pm$0.0004} \\
				                                                                              &                                       & GIN                                  & 0.004\scriptsize$\pm$0.001                   & 0.0006\scriptsize$\pm$0.00008                & \textbf{0.001\scriptsize$\pm$0.0001} & 0.002\scriptsize$\pm$0.0005          & \textbf{0.008\scriptsize$\pm$0.002} & 0.0006\scriptsize$\pm$0.0001           & 0.002\scriptsize$\pm$0.0008         & 0.002\scriptsize$\pm$0.0007          \\
				\cmidrule{2-11}

				                                                                              & {\multirow{3}{*}{\scriptsize \xmark}}
				                                                                              & GEN                                   & 0.181\scriptsize$\pm$0.023           & 0.006\scriptsize$\pm$0.0003                  & 0.006\scriptsize$\pm$0.001                   & 0.011\scriptsize$\pm$0.004           & 0.309\scriptsize$\pm$0.025           & 0.004\scriptsize$\pm$0.0002         & 0.006\scriptsize$\pm$0.001             & 0.003\scriptsize$\pm$0.001                                                 \\
				                                                                              &                                       & GCN                                  & 0.207\scriptsize$\pm$0.006                   & 0.004\scriptsize$\pm$0.001                   & 0.002\scriptsize$\pm$0.001           & 0.006\scriptsize$\pm$0.0003          & 0.267\scriptsize$\pm$0.049          & 0.003\scriptsize$\pm$0.0004            & 0.004\scriptsize$\pm$0.001          & 0.002\scriptsize$\pm$0.0003          \\
				                                                                              &                                       & GIN                                  & 0.211\scriptsize$\pm$0.007                   & 0.003\scriptsize$\pm$0.0002                  & 0.003\scriptsize$\pm$0.001           & 0.008\scriptsize$\pm$0.002           & 0.236\scriptsize$\pm$0.014          & 0.003\scriptsize$\pm$0.0004            & 0.004\scriptsize$\pm$0.002          & 0.003\scriptsize$\pm$0.0002          \\

				\bottomrule
			\end{tabular}
		}
	\end{center}
\end{table*}

\section{\uppercase{Experimental Study}}\label{sec:experiments}

Here, we empirically evaluate the ability of IPM-MPNNs to predict the optimal solutions of LPs. In particular, we aim to answer the following questions.

\xhdr{Q1} Can MPNNs properly imitate the performance of IPM solvers in practice?\\
\xhdr{Q2} How is MPNNs' performance compared with competing neural-network-based solutions?\\
\xhdr{Q3} What advantage does our MPNN architecture hold compared with traditional IPM solvers?\\
\xhdr{Q4} Do MPNNs generalize to instances larger than seen during training?

Our experimental results are reproducible with the code available at \url{https://github.com/chendiqian/IPM_MPNN}.

\xhdr{Datasets} We obtain LP instances from mixed-integer optimization instances by dropping the integrality constraints over variables. Following~\cite{Gas+2019}, we use four classes of problems, namely \new{set covering}, \new{maximal independent set}, \new{combinatorial auction}, and \new{capacitated facility location}. For each problem type, we generate small- and large-size instances. We describe the datasets and our parameters for dataset generation in \cref{app:dataset}. From each LP instance $I = (\vc{A}, \vc{b}, \vc{c})$, we generate a tripartite graph $G(I)$, following~\cref{sec:ipm-mpnn}, and construct initial node and edge features as follows. We indicate the $i$th row of the constraint matrix $\vc{A}$ as $\vc{A}_i$ and the $j$th column as $\vc{A}_{\cdot, j}$. For a given variable node $v_j \in V(I)$, its initial node features are set to the mean and standard deviation of the column vector $\vc{A}_{\cdot,j}$, resulting in two features. Analogously, for a constraint node $c_i \in C(I)$, we derive initial features from the statistical properties of the row vector $\vc{A}_i$. For the objective node $o$, the features are characterized in a corresponding manner using the vector $\vc{c}$. In the supervised learning regime, it is mandatory to have ground truth labels to guide the model predictions. For our IPM-MPNNs, the outputs of each layer $t$ are the prediction of the variables' value $\vc{z}^{(t)}$. Consequently, we utilize the intermediate variable values $\vc{y}^{(t)}$, as provided by the solver, to serve as our ground truth. However, to prevent the MPNN from becoming excessively deep, we sample the ground truth steps, i.e., we adopt an equidistant sampling strategy for the solver's steps, ensuring that the number of sampled steps aligns with the depth of the corresponding MPNN. We split the graph datasets into train, validation, and test splits with ratios $0.8, 0.1$, and $0.1$. We conducted the experiments by evaluating each dataset multiple times, performing three independent runs with distinct random seeds. The reported results represent the average numbers over the test set across these runs and the corresponding standard deviations. We executed all experiments on a single NVIDIA A100 80GB GPU card—see~\cref {sec:hyperparam} for training-related hyperparameters.

\xhdr{IPM-MPNNs' ability to solve LPs (Q1)} We collect and organize the results of our MPNN method in \cref{tab: main_table}. We report the numbers on the four types of relaxed MILP instances, each with small and large sizes. As explained in~\Cref{sec:ipm-mpnns}, we leverage three types of MPNN layers, GCN \citep{Kip+2017}, GIN \citep{Xu+2018b}, and GEN \citep{li2020deepergcn}, as the backbone of our MPNN architectures. We split the table into two main parts, namely the mean absolute relative objective gap
\begin{equation*}
	\frac{1}{N} \sum_{i=1}^N \left| \frac{\vc{c}\tran \big(\vc{y}_i^{(T)} - \vc{z}_i^{(T)} \big)}{\vc{c}\tran \vc{y}_i^{(T)}} \right| \times 100
\end{equation*}
of the last step $T$ over the test set, and the mean absolute constraint violation of the last step
\begin{equation*}
	\frac{1}{N} \sum_{i=1}^N \frac{1}{m_i} {\lVert \mathsf{reLU} (\vc{A}_i \vc{z}_i^{(T)} - \vc{b}_i) \rVert}_1,
\end{equation*}
where the normalization term $m_i$ is the number of constraints of the $i$th instance. As seen, our IPM-MPNN architectures consistently align with the IPM solver at the last converged step, with marginal constraint violation. Moreover, the relative objective gaps of our method are all under $1\%$. The GCN-based IPM-MPNNs perform best on large maximal independent set relaxation instances at $0.094\pm0.005 \%$. Overall, GEN and GCN perform better than GIN layers among the three graph convolutions. The absolute constraint violations are mostly at the $1 \times 10^{-3}$ level, with the best ($0.0003\pm0.0001$) achieved by GCN on small maximal independent set relaxation instances.

\xhdr{Baselines (Q2)} To answer question \textbf{Q2}, we compare IPM-MPNNs to two baselines. First, we compare our IPM-MPNNs to \citet{Che+2023}, where the authors proposed encoding LP instances as bipartite graphs. In \cref{tab: main_table}, the rows with a cross mark (\xmark) on the left are the bipartite baselines. As seen in this table, our IPM-MPNNs outperform the bipartite architectures in all instances with all types of MPNNs. Most relative objective gaps lie above $1\%$, except on the maximal independent set and large combinatorial auction relaxation instances. The largest gap is observed on the small set covering relaxation instances by GEN, with the baseline reporting as much as $26.1\times$ higher constraint violation number. Hence, our results indicate that IPM-MPNNs's tripartite representation is crucial. We also compare our IPM-MPNNs to a neural-ODE-based approach \citep{Wu+2023}. Since their approach is quite expensive during training by means of both time and GPU memory, we generate $\num{1000}$ mini-sized instances. Due to the architecture-agnostic property of their approach, we embed our MPNNs in their training pipeline. When presenting the runtime and GPU usage, we use the same batch size as the baseline method on our MPNN approach for a fair comparison, even though our method can scale to a much larger batch size in practice. We report results in \cref{tab:ode_baseline}. Taking the GEN layer as an example, our method shows consistently better results than the neural-ODE baseline method by reaching at most $14.2 \times$ lower relative objective gap, $8.6 \times$ faster training, and $300.8 \times $ less memory on the mini-sized capacitated facility location relaxation instances.

\begin{table}[t!]
	\caption{Comparing  between IPM-MPNNs  and the \citet{Wu+2023} method on $\num{1\,000}$ mini-sized instances. We report the average relative objective gap, constraint violation, training time over three runs, and maximal GPU memory allocated. We print the best results per target in bold.}
	\label{tab:ode_baseline}
	\begin{center}
		\resizebox{0.9\columnwidth}{!}{
			\begin{tabular}{ccccccc}
				\toprule

				                                                             & \textbf{Method}         & MPNN                                & Setcover                            & Indset                               & Cauc                                & Fac                                  \\

				\midrule
				{\multirow{6}{*}{\rotatebox[origin=c]{90}{Obj. gap [\%]}}}   & {\multirow{3}{*}{ODE}}

				                                                             & GEN                     & 14.915\scriptsize$\pm$0.425         & 6.225\scriptsize$\pm$0.097          & 13.845\scriptsize$\pm$0.554          & 20.560\scriptsize$\pm$0.059                                                \\
				                                                             &                         & GCN                                 & 14.545\scriptsize$\pm$0.055         & 6.148\scriptsize$\pm$0.071           & 12.945\scriptsize$\pm$0.385         & 20.690\scriptsize$\pm$0.037          \\
				                                                             &                         & GIN                                 & 15.050\scriptsize$\pm$0.228         & 6.474\scriptsize$\pm$0.114           & 13.470\scriptsize$\pm$1.145         & 21.010\scriptsize$\pm$0.529          \\
				\cmidrule{2-7}

				                                                             & {\multirow{3}{*}{Ours}}
				                                                             & GEN                     & 2.555\scriptsize$\pm$0.122          & 1.580\scriptsize$\pm$0.095          & \textbf{2.733\scriptsize$\pm$0.074}  & 1.449\scriptsize$\pm$0.255                                                 \\
				                                                             &                         & GCN                                 & \textbf{2.375\scriptsize$\pm$0.062} & 1.447\scriptsize$\pm$0.152           & 2.769\scriptsize$\pm$0.091          & 1.478\scriptsize$\pm$0.154           \\
				                                                             &                         & GIN                                 & 2.740\scriptsize$\pm$0.3184         & \textbf{1.404\scriptsize$\pm$0.153}  & 2.847\scriptsize$\pm$0.091          & \textbf{1.328\scriptsize$\pm$0.201}  \\

				\midrule

				{\multirow{6}{*}{\rotatebox[origin=c]{90}{Constraint vio.}}} & {\multirow{3}{*}{ODE}}

				                                                             & GEN                     & 0.072\scriptsize$\pm$0.006          & 0.046\scriptsize$\pm$0.002          & 0.025\scriptsize$\pm$0.008           & 0.020\scriptsize$\pm$0.001                                                 \\
				                                                             &                         & GCN                                 & 0.049\scriptsize$\pm$0.012          & 0.048\scriptsize$\pm$0.008           & 0.025\scriptsize$\pm$0.0002         & 0.020\scriptsize$\pm$0.0005          \\
				                                                             &                         & GIN                                 & 0.064\scriptsize$\pm$0.005          & 0.043\scriptsize$\pm$0.008           & 0.024\scriptsize$\pm$0.005          & 0.014\scriptsize$\pm$0.004           \\
				\cmidrule{2-7}
				                                                             & {\multirow{3}{*}{Ours}}
				                                                             & GEN                     & \textbf{0.023\scriptsize$\pm$0.002} & 0.005\scriptsize$\pm$0.0001         & 0.015\scriptsize$\pm$0.003           & 0.013\scriptsize$\pm$0.003                                                 \\
				                                                             &                         & GCN                                 & 0.030\scriptsize$\pm$0.003          & 0.005\scriptsize$\pm$0.0006          & 0.017\scriptsize$\pm$0.002          & \textbf{0.005\scriptsize$\pm$0.0006} \\
				                                                             &                         & GIN                                 & 0.023\scriptsize$\pm$0.005          & \textbf{0.005\scriptsize$\pm$0.0003} & \textbf{0.014\scriptsize$\pm$0.001} & 0.006\scriptsize$\pm$0.0006          \\

				\midrule

				{\multirow{6}{*}{\rotatebox[origin=c]{90}{Time [s]}}}        & {\multirow{3}{*}{ODE}}

				                                                             & GEN                     & 47.829                              & 51.283                              & 63.068                               & 96.298                                                                     \\
				                                                             &                         & GCN                                 & 57.196                              & 80.133                               & 79.606                              & 34.297                               \\
				                                                             &                         & GIN                                 & 55.918                              & 64.628                               & 39.904                              & 62.448                               \\
				\cmidrule{2-7}
				                                                             & {\multirow{3}{*}{Ours}}
				                                                             & GEN                     & 10.177                              & 9.617                               & 9.946                                & 11.124                                                                     \\
				                                                             &                         & GCN                                 & 18.964                              & 8.688                                & \textbf{7.368}                      & \textbf{8.834}                       \\
				                                                             &                         & GIN                                 & \textbf{6.042}                      & \textbf{8.096}                       & 8.881                               & 10.771                               \\

				\midrule

				{\multirow{6}{*}{\rotatebox[origin=c]{90}{Memory (GB)}}}     & {\multirow{3}{*}{ODE}}

				                                                             & GEN                     & 16.455                              & 25.931                              & 23.354                               & 44.520                                                                     \\
				                                                             &                         & GCN                                 & 16.489                              & 34.003                               & 23.805                              & 10.640                               \\
				                                                             &                         & GIN                                 & 18.238                              & 30.101                               & 13.482                              & 24.713                               \\
				\cmidrule{2-7}
				                                                             & {\multirow{3}{*}{Ours}}
				                                                             & GEN                     & \textbf{0.091}                      & 0.088                               & 0.101                                & 0.148                                                                      \\
				                                                             &                         & GCN                                 & 0.201                               & 0.134                                & \textbf{0.069}                      & \textbf{0.142}                       \\
				                                                             &                         & GIN                                 & 0.094                               & \textbf{0.073}                       & 0.148                               & 0.187                                \\

				\bottomrule
			\end{tabular}
		}
	\end{center}
\end{table}

\xhdr{Inference time profiling (Q3)} We also compare IPM-MPNN's performance to exact IPM solvers. Thereto, we compare the time required to solve an instance between traditional solvers, namely SciPy's IPM solver and a Python-based custom-build one, and our IPM-MPNN. We run the solvers and our MPNNs on the test set of each dataset and report the mean and standard deviation in seconds. According to \cref{tab:solver_time}, our MPNN clearly outperforms the SciPy IPM solver on all large instances. Further, IPM-MPNNs beat our Python-based IPM solver described in~\cref{alg:ipm-practice} on all instances. It is worth noting that both IPM solvers exhibit sensitivity to problem sizes. For example, the SciPy solver sees a performance degradation of approximately $65.0 \times$ when transitioning from small to large set covering problems. In contrast, our MPNN method demonstrates a more consistent behavior across varying problem sizes, showing only $1.2\times$ slowdown for the analogous instances. Hence, we can positively answer question \textbf{Q3}.

\begin{table}[t!]
	\caption{Comparing IPM-MPNNs' inference time to SciPy's IPM implementation and our Python-based IPM solver. We report mean and standard deviation in seconds over three runs. We print the best results per target in bold.}
	\label{tab:solver_time}
	\begin{center}
		\resizebox{0.9\columnwidth}{!}{
			\begin{tabular}{cccccc}
				\toprule
				\textbf{Instances} & SciPy Solver                        & Our Solver                 & GEN                                 & GCN                        & GIN                                 \\
				\midrule
				Small setcover     & \textbf{0.006\scriptsize$\pm$0.004} & 0.071\scriptsize$\pm$0.015 & 0.033\scriptsize$\pm$0.001          & 0.029\scriptsize$\pm$0.001 & 0.017\scriptsize$\pm$0.001          \\
				Large setcover     & 0.390\scriptsize$\pm$0.098          & 3.696\scriptsize$\pm$2.141
				                   & 0.033\scriptsize$\pm$0.001          & 0.030\scriptsize$\pm$0.001 & \textbf{0.021\scriptsize$\pm$0.001}                                                                    \\
				\midrule
				Small indset       & \textbf{0.008\scriptsize$\pm$0.067} & 0.089\scriptsize$\pm$0.024 & 0.033\scriptsize$\pm$0.001          & 0.031\scriptsize$\pm$0.002 & 0.021\scriptsize$\pm$0.001          \\
				Large indset       & 0.226\scriptsize$\pm$0.087          & 1.053\scriptsize$\pm$0.281 & 0.033\scriptsize$\pm$0.002          & 0.030\scriptsize$\pm$0.001 & \textbf{0.021\scriptsize$\pm$0.001} \\
				\midrule
				Small cauc         & \textbf{0.012\scriptsize$\pm$0.005} & 0.151\scriptsize$\pm$0.035 & 0.033\scriptsize$\pm$0.001          & 0.028\scriptsize$\pm$0.001 & 0.021\scriptsize$\pm$0.001          \\
				Large cauc         & 0.282\scriptsize$\pm$0.065          & 3.148\scriptsize$\pm$0.880 & 0.033\scriptsize$\pm$0.001          & 0.029\scriptsize$\pm$0.001 & \textbf{0.021\scriptsize$\pm$0.001} \\
				\midrule
				Small fac          & \textbf{0.017\scriptsize$\pm$0.011} & 2.025\scriptsize$\pm$1.854 & 0.029\scriptsize$\pm$0.001          & 0.029\scriptsize$\pm$0.001 & 0.022\scriptsize$\pm$0.001          \\
				Large fac          & 0.732\scriptsize$\pm$0.324          & 6.229\scriptsize$\pm$2.672 & 0.030\scriptsize$\pm$0.001          & 0.031\scriptsize$\pm$0.001 & \textbf{0.022\scriptsize$\pm$0.001} \\
				\bottomrule
			\end{tabular}
		}
	\end{center}
\end{table}

\xhdr{Size generalization (Q4)} We investigate the possibility of generalizing our pre-trained MPNNs to larger instances than encountered during training. To that, we generate new sets of novel instances, each with the same number of instances as the test set. In \cref{tab:size_gen} in the appendix, we list our training instance sizes and a collection of test instance sizes. Taking GEN on set covering relaxation problems as an example, as the inference size grows, the relative objective gap increases a bit, while the constraint violations are overall worse than on the training set. Notably, for the training size [600,800], which is at least $1.37 \times$ larger than the training instances, the objective gap is merely $0.28 \%$ worse than [500,700] size. For question \textbf{Q4}, we can conclude that our pre-trained MPNNs can generalize to unseen, larger instances to some extent.

\section{\uppercase{Conclusion}}
In summary, our study establishes a strong connection between  (MPNNs and IPMs for LPs. We have shown that MPNNs can effectively simulate IPM iterations, revealing their potential in emulating strong branching within the branch-and-bound framework, as demonstrated by~\cite{Gas+2019}. In addition, building on this connection, we proposed IPM-MPNNs for learning to solve large-scale LP instances approximately, surpassing neural baselines and exact IPM solvers in terms of solving time across various problem domains. Looking forward,  promising avenues for further research involve expanding our theoretical framework to encompass a broader range of convex optimization problems.

\subsubsection*{Acknowledgments}
CQ and CM are partially funded by a DFG Emmy Noether grant (468502433) and RWTH Junior Principal Investigator Fellowship under Germany's Excellence Strategy.

\bibliography{references}

\appendix
\onecolumn

\section{Extended notation}\label{notation_app}
A \new{graph} $G$ is a pair $(V(G),E(G))$ with \emph{finite} sets of
\new{vertices} or \new{nodes} $V(G)$ and \new{edges} $E(G) \subseteq \{ \{u,v\} \subseteq V(G) \mid u \neq v \}$. An \new{attributed graph} $G$  is a triple $(V(G),E(G),a)$ with a graph $(V(G),E(G))$ and (vertex-)attribute function $a \colon V(G) \to \Rb^{1 \times d}$, for some $d > 0$. Then $a(v)$ are an \new{node attributes} or \new{features} of $v$, for $v$ in $V(G)$. Equivalently, we define an $n$-vertex attributed graph $G \coloneqq (V(G),E(G),a)$ as a pair $\mG=(G,\vc{L})$, where $G = (V(G),E(G))$ and $\vc{L}$ in $\Rb^{n\times d}$ is a \new{node attribute matrix}. Here, we identify $V(G)$ with $[n]$. For a matrix $\vc{L}$ in $\Rb^{n\times d}$ and $v$ in $[n]$, we denote by $\vc{L}$ in $\Rb^{1\times d}$ the $v$th row of $\vc{L}$ such that $\vc{L}_{v} \coloneqq a(v)$. We also write $\Rb^d$ for $\Rb^{d \times 1}$. The \new{neighborhood} of $v$ in $V(G)$ is denoted by $N(v) \coloneqq  \{ u \in V(G) \mid (v, u) \in E(G) \}$.

\section{Missing proofs}

In this section, we prove \Cref{thm:main} and \Cref{thm:side}. To do so, first notice the following. In Step \ref{eq:ipm-theory:deltaw} of \Cref{alg:ipm-theory} and Step \ref{eq:ipm-practice:deltaw} of \Cref{alg:ipm-practice}, we must solve the system \Cref{eq:ipm-deltaw}. This can be done as follows. Since $(\vc{x},\vc{w},\vc{s},\vc{r})\geq0$ at any step of the algorithm, the matrix $\vc{Q}=\vc{A}\vc{D}(\vc{s})^{-1}\vc{D}(\vc{x})\vc{A}\tran+\vc{D}(\vc{w})^{-1}\vc{D}(\vc{r})$ is always symmetric and positive definite. Therefore, we can solve the system with the conjugate gradient algorithm \citep[Algorithm 5.2]{nocedal2006numerical}, say with initial point set at $\Delta\vc{w}_0=\vc{0}$ for simplicity. A specialization of the algorithm to the problem of solving \Cref{eq:ipm-deltaw} is described as Algorithm \ref{alg:cg}.

\begin{algorithm}[t]
\caption{Conjugate gradient algorithm for IPMs}
\label{alg:cg}
\begin{algorithmic}[1]
\REQUIRE An instance $I=(\vc{A}, \vc{b}, \vc{c})$ and a point $(\vc{x}, \vc{w}, \vc{s}, \vc{r})>0$.
\STATE $\vc{p}\;\leftarrow\;\vc{b}-\vc{Ax}+\sigma\mu\vc{D}(\vc{w})^{-1}\vc{1} +\vc{A}\vc{D}(\vc{s})^{-1}\vc{D}(\vc{x})[\vc{c}-\vc{A}\tran \vc{w}-\sigma\mu\vc{D}(\vc{x})^{-1}\vc{1}]$
\label{eq:cg-p0}
\STATE $\vc{v}\leftarrow -\vc{p}$
\label{eq:cg-v0}
\STATE $\Delta\vc{w}\leftarrow \vc{0}$
\label{eq:cg-w0}
\FOR{$m$ iterations}
\STATE $\vc{u} \leftarrow  [\vc{A}\vc{D}(\vc{s})^{-1}\vc{D}(\vc{x})\vc{A}\tran+\vc{D}(\vc{w})^{-1}\vc{D}(\vc{r})]\vc{p}$
\label{eq:cg-u}
\STATE $\alpha\leftarrow\vc{v}\tran\vc{v}/\vc{p}\tran \vc{u}$
\label{eq:cg-alpha}
\STATE $\Delta \vc{w} \mathrel{+}=\alpha \vc{p}$
\label{eq:cg-deltaw}
\STATE $\vc{v}_\text{new}\leftarrow \vc{v}+\alpha \vc{u}$
\label{eq:cg-vnew}
\STATE $\beta\leftarrow\vc{v}_\text{new}\tran\vc{v}_\text{new}/\vc{v}\tran\vc{v}$
\label{eq:cg-beta}
\STATE $\vc{v}\leftarrow\vc{v}_\text{new}$
\label{eq:cg-v}
\STATE $\vc{p}\leftarrow -\vc{v}+\beta \vc{p}$
\label{eq:cg-p}
\ENDFOR
\RETURN A direction $\Delta \vc{w}$ that solves the system \eqref{eq:ipm-deltaw}.
\end{algorithmic}
\end{algorithm}

\begin{lemma}\label{thm:cg-is-gnn}
There exists a MPNN $f_{\textsf{MPNN,CG}}$ composed of a $\cO(m)$ successive message-passing steps that reproduces \Cref{alg:cg}, in the sense which for any LP instance $I = (\vc{A},\vc{b},\vc{c})$ and any point $(\vc{x}, \vc{s}, \vc{w}, \vc{r})> \vc{0}$, $f_{\textsf{MPNN,CG}}$ maps the graph $G(I)$ carrying $[\vc{x}, \vc{s}]$ on the variable nodes, $[\vc{w}, \vc{r}]$ on the constraint nodes and $[\mu]$ on the objective node to the same graph $G(I)$ carrying the output $[\Delta \vc{w}]$ of \Cref{alg:cg} on the constraint nodes.
\end{lemma}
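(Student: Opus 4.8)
The plan is to maintain the conjugate-gradient iterates of \Cref{alg:cg} as node features and to show that one pass of the loop body is realized by a constant number of message-passing steps; since the loop runs $m$ times, this yields $\cO(m)$ steps in total. Concretely, the vectors $\vc{p}, \vc{v}, \vc{u}$ and the partial solution $\Delta\vc{w}$ all lie in $\Rb^m$, so I store their entries on the constraint nodes, keep the ``static'' data $x_i, s_i$ on each variable node and $w_j, r_j$ on each constraint node (carried unchanged through every layer), keep $\mu$ on the objective node, and reuse the edge weights $A_{ji}$, $b_j$, $c_i$ already attached to $G(I)$. The only nontrivial operations in the loop are the matrix-vector product $\vc{u} = \vc{Q}\vc{p}$ in \Cref{eq:cg-u} and the scalar inner products defining $\alpha$ and $\beta$ in \Cref{eq:cg-alpha,eq:cg-beta}; everything else is an entrywise update that I absorb into update functions.

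For the matrix-vector product I use the factorization $\vc{Q}\vc{p} = \vc{A}\vc{D}(\vc{s})^{-1}\vc{D}(\vc{x})\vc{A}\tran\vc{p} + \vc{D}(\vc{w})^{-1}\vc{D}(\vc{r})\vc{p}$ and route it through the variable nodes in two message-passing steps. First, a constraint-to-variable step lets each variable node $v_i$ compute $(\vc{A}\tran\vc{p})_i = \sum_{j \in N(v_i)} A_{ji}\,p_j$ by aggregating the neighbouring constraint features weighted by the edge weights, after which its update function rescales the result entrywise by $x_i/s_i$ using the locally stored $x_i, s_i$. Second, a variable-to-constraint step lets each constraint node $c_j$ aggregate $\sum_{i \in N(c_j)} A_{ji}(\,\cdot\,)_i$ and its update function adds the local term $(r_j/w_j)\,p_j$, producing exactly $u_j = (\vc{Q}\vc{p})_j$. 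The positive definiteness of $\vc{Q}$ and the strict positivity of $(\vc{x},\vc{w},\vc{s},\vc{r})$ guarantee that all these quotients are well defined.

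The scalars $\vc{v}\tran\vc{v}$, $\vc{p}\tran\vc{u}$, and $\vc{v}_{\text{new}}\tran\vc{v}_{\text{new}}$ are global sums over the constraint index, which I realize using the objective node $o$, connected to every constraint. A constraint-to-objective step sends the entrywise products $v_j^2$ and $p_j u_j$ to $o$, whose update function sums them and forms $\alpha = \vc{v}\tran\vc{v}/\vc{p}\tran\vc{u}$; an objective-to-constraint step then broadcasts $\alpha$ back to every constraint node, after which the entrywise updates $\Delta\vc{w} \mathrel{+}= \alpha\vc{p}$ and $\vc{v}_{\text{new}} = \vc{v} + \alpha\vc{u}$ are performed locally. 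An analogous aggregate-compute-broadcast triple produces $\beta$ and the update $\vc{p} \leftarrow -\vc{v} + \beta\vc{p}$. Thus one CG iteration costs a constant number of message-passing steps. The initial vector $\vc{p}$ of \Cref{eq:cg-p0} has the same shape as $\vc{Q}\vc{p}$, namely an $\vc{A}\cdots\vc{A}\tran$ term plus entrywise terms built from $\vc{b}, \vc{c}, \mu$ (available as edge weights and the objective feature), so it is produced by the same two-step constraint-variable-constraint routine before entering the loop.

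The main obstacle is not the routing but insisting on \emph{exact} reproduction rather than approximation: the update and message functions must compute entrywise products (e.g.\ $v_j^2$, $p_j u_j$, $A_{ji}\,p_j$) and quotients (e.g.\ $x_i/s_i$, $\alpha$, $\beta$) exactly, so I must exhibit these functions explicitly and argue that each is a legitimate message or update function in our model and is evaluated only where its arguments keep denominators strictly positive along the CG trajectory. Care is also needed to thread the unchanged bookkeeping features ($x_i, s_i, w_j, r_j, \mu$, together with the evolving $\vc{p}, \vc{v}, \Delta\vc{w}$) through every layer, so that each step has access to exactly the quantities it consumes. Once this bookkeeping is fixed, composing the per-iteration blocks $m$ times yields the claimed MPNN $f_{\textsf{MPNN,CG}}$ with $\cO(m)$ message-passing steps.
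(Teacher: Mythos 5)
Your proposal is correct and takes essentially the same route as the paper's proof: you use the identical factorization of $\vc{Q}\vc{p}$ routed constraints-to-variables-to-constraints with local rescalings by $x_i/s_i$ and $r_j/w_j$, the same aggregate-compute-broadcast pattern through the objective node for the scalars $\alpha$ and $\beta$, and the same two-step routine for the initial $\vc{p}$, yielding a constant number of message-passing steps per CG iteration and $\cO(m)$ overall. The remaining differences (e.g., fusing the two inner-product aggregations into a single constraints-to-objective step, and your explicit remarks on exact arithmetic in the message/update functions) are cosmetic, since the paper's MPNN model admits arbitrary message and update functions and likewise relies on $(\vc{x},\vc{w},\vc{s},\vc{r})>\vc{0}$ to keep all denominators positive.
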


\begin{proof}
We will go through every step of the algorithm and show that it can be computed by message-passing steps on $G(I)$.
\begin{itemize}
\item For Step \ref{eq:cg-p0}, the computation can be broken down as follows. First, we can compute $\vc{h}_1\leftarrow\vc{A}\tran\vc{w}$, $\vc{h}_2\leftarrow \mu\vc{1}_n$ and $\vc{h}_3\leftarrow \vc{c}$ by a constraints-to-variables and two objective-to-variables message-passing steps, respectively. Next, one can compute $\vc{h}_4 = -\vc{x} + \vc{D}(\vc{s})^{-1}\vc{D}(\vc{x})[\vc{h}_3-\vc{h}_1-\sigma \vc{D}(\vc{x})^{-1}\vc{h}_2]$ as a local operation on variable nodes. Then we can compute $\vc{h}_5\leftarrow \vc{A}\vc{h}_4$, $\vc{h}_6\leftarrow \mu\vc{1}_m$ and $\vc{h}_7\leftarrow \vc{b}$ as a variables-to-constraints and two objective-to-constraints message-passing steps, respectively. Finally, we can compute $\vc{p}\leftarrow \vc{h}_7+\vc{h}_5+\sigma\vc{D}(\vc{w})^{-1}\vc{h}_6$ as a local operation on constraint nodes.

\item Steps \ref{eq:cg-v0} and\ref{eq:cg-w0} are just local operations on constraint nodes.

\item Step \ref{eq:cg-u} can be broken down as $\vc{h}_1\leftarrow\vc{A}\tran\vc{p}$, $\vc{h}_2\leftarrow\vc{D}(\vc{s})^{-1}\vc{D}(\vc{x})\vc{h}_1$, $\vc{h}_3\leftarrow \vc{A}\vc{h}_2$, $\vc{u}\leftarrow \vc{h}_3+\vc{D}(\vc{w})^{-1}\vc{D}(\vc{r})\vc{p}$. This can be realized as a constraints-to-variables message-passing step, a local operation on variable nodes, a variables-to-constraints message-passing step, and a local operation on constraint nodes.

\item Step \ref{eq:cg-alpha} can be broken down as $\vc{h}_1\leftarrow \vc{v}\tran \vc{v}$, $\vc{h}_2\leftarrow \vc{p}\tran \vc{u}$, $\alpha\leftarrow \vc{h}_1/\vc{h}_2$. This can be realized as a constraints-to-objective message-passing step, another constraints-to-objective message-passing step, and a local operation on the objective node.

\item Step \ref{eq:cg-deltaw} can be broken down as a message-passing step from the objective node to the constraint nodes $\vc{h}_1\leftarrow \alpha\vc{1}$, followed by a local operation on the constraint nodes $\Delta\vc{w}\leftarrow \Delta\vc{w}+\vc{D}(\vc{h}_1)\vc{p}$. 

\item Similarly, step \ref{eq:cg-vnew} can be written an objective-to-constraints message-passing step $\vc{h}_1\leftarrow \alpha\vc{1}$, followed by a local operation on constraint nodes $\Delta\vc{v}_{\text{new}}\leftarrow \vc{v}+\vc{D}(\vc{h}_1)\vc{u}$.

\item Step \ref{eq:cg-beta} can be broken down as $h_1\leftarrow \vc{v}_{\text{new}}\tran \vc{v}_{\text{new}}$, $\vc{h}_2\leftarrow \vc{v}\tran \vc{v}$, $\beta\leftarrow \vc{h}_1/\vc{h}_2$. This can be realized as a constraints-to-objective message-passing step, another constraints-to-objective message-passing step, and a local operation on the objective node.

\item Step \ref{eq:cg-v} is a local operation on constraint nodes.

\item Finally, step \ref{eq:cg-p} can be written an objective-to-constraints message-passing step $\vc{h}_1\leftarrow \beta\vc{1}$, followed by a local operation on constraint nodes $\Delta\vc{p}\leftarrow -\vc{v}+\vc{D}(\vc{h}_1)\vc{p}$.
\end{itemize}
Counting the number of successive message-passing steps, we find that Steps \ref{eq:cg-p0}--\ref{eq:cg-w0} can be realized in 8 message-passing steps, while each iteration, comprised of Steps \ref{eq:cg-u}--\ref{eq:cg-p}, can be realized in 9 message-passing steps, completing the proof.
\end{proof}

We now move on with the proofs of \Cref{thm:main} and \Cref{thm:side}.

\begin{theorem}\label{thm:main_app}
There exists an MPNN $f_{\mathsf{MPNN}, \mathsf{IPM1}}$ composed of $\cO(m)$ message-passing steps that reproduces an iteration of \Cref{alg:ipm-theory}, in the sense that for any LP instance $I=(\vc{A},\vc{b},\vc{c})$ and any iteration step $t\geq0$, $f_{\mathsf{MPNN}, \mathsf{IPM1}}$ maps the graph $G(I)$ carrying $[\vc{x}_t, \vc{s}_t]$ on the variable nodes and $[\vc{w}_t, \vc{r}_t]$ on the constraint nodes to the same graph $G(I)$ carrying  $[\vc{x}_{t+1}, \vc{s}_{t+1}]$ on the variable nodes and $[\vc{w}_{t+1}, \vc{r}_{t+1}]$ on the constraint nodes.
\end{theorem}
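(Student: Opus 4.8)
The plan is to walk through \Cref{alg:ipm-theory} line by line, exhibiting for each line a constant number of message-passing steps on $G(I)$ that reproduces it exactly, and to invoke \Cref{thm:cg-is-gnn} for the single line that solves the linear system. Throughout, I keep $[\vc{x}_t,\vc{s}_t]$ on the variable nodes, $[\vc{w}_t,\vc{r}_t]$ on the constraint nodes, and any scalars such as $\mu$ on the objective node, storing each quantity in its own feature channel so that the update functions can carry the full state forward unchanged while auxiliary channels hold intermediate results; since the feature dimension $d$ is free, there is always room for the state and the scratch quantities simultaneously. I also precompute $n$ and $m$ on the objective node by sum-aggregating the constant $1$ from the variable and constraint nodes, so that any division by $n+m$ is available as a local operation there.

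First (Step \ref{eq:ipm-theory:mu}) I form $\mu=(\vc{x}\tran\vc{s}+\vc{w}\tran\vc{r})/(n+m)$: each variable node computes $x_is_i$ and each constraint node $w_jr_j$ locally, a variables-to-objective and a constraints-to-objective message-passing step with sum-aggregation deliver $\vc{x}\tran\vc{s}$ and $\vc{w}\tran\vc{r}$ to the objective node, a local operation divides by $n+m$, and a broadcast makes $\mu$ available where needed. For the linear solve in Step \ref{eq:ipm-theory:deltaw} I apply $f_{\textsf{MPNN,CG}}$ of \Cref{thm:cg-is-gnn}, which takes $\cO(m)$ message-passing steps and deposits $\Delta\vc{w}$ on the constraint nodes. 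The remaining directions reuse the building blocks from the proof of \Cref{thm:cg-is-gnn}: Step \ref{eq:ipm-theory:deltax} produces $\vc{A}\tran\Delta\vc{w}$ and $\vc{A}\tran\vc{w}$ by a single constraints-to-variables message-passing step using the edge weights $A_{ij}$, pulls $\vc{c}$ and $\mu$ along the objective-to-variable edges, and finishes with a local operation applying $\vc{D}(\vc{s})^{-1}\vc{D}(\vc{x})$; Step \ref{eq:ipm-theory:deltas} is then a purely local operation on the variable nodes and Step \ref{eq:ipm-theory:deltar} a purely local operation on the constraint nodes. Finally (Step \ref{eq:ipm-theory:update}) the update $(\vc{x},\vc{w},\vc{s},\vc{r})\mathrel{+}=\alpha(\Delta\vc{x},\Delta\vc{w},\Delta\vc{s},\Delta\vc{r})$ is one affine local operation once $\alpha$ is known.

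The one genuinely delicate line is the step-length selection in Step \ref{eq:ipm-theory:alpha}, which couples all nodes through the global normalizer $(\vc{x}+\alpha\Delta\vc{x})\tran(\vc{s}+\alpha\Delta\vc{s})+(\vc{w}+\alpha\Delta\vc{w})\tran(\vc{r}+\alpha\Delta\vc{r})$ and asks for the largest $\alpha$ satisfying a collection of quadratic inequalities. The plan is to make this dependence explicit as a low-degree polynomial: each variable node forms the three coefficients of the scalar quadratic $p_i(\alpha)=(x_i+\alpha\Delta x_i)(s_i+\alpha\Delta s_i)$ and each constraint node those of $q_j(\alpha)=(w_j+\alpha\Delta w_j)(r_j+\alpha\Delta r_j)$, all as local operations; one sum-aggregation to the objective node then yields the three coefficients of $g(\alpha)=\sum_ip_i(\alpha)+\sum_jq_j(\alpha)$, which are broadcast back to every node. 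Each node can now form $\tilde p_i(\alpha)=p_i(\alpha)-\tfrac{\gamma}{n+m}g(\alpha)$ locally and, since the algorithm maintains the neighborhood invariant $\tilde p_i(0)=x_is_i-\gamma\mu\geq 0$, solve by the quadratic formula for the smallest $\alpha_i\in(0,1]$ at which $\tilde p_i$ would first turn negative, setting $\alpha_i=1$ if none does. A min-aggregation to the objective node returns $\alpha=\min(\min_i\alpha_i,\min_j\alpha_j)$, which is broadcast to all nodes for the update; the square root in the quadratic formula is absorbed into the arbitrary parameterized update function, so the computation is exact.

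I expect Step \ref{eq:ipm-theory:alpha} to be the main obstacle, both because the step length is defined implicitly through a global, $\alpha$-dependent normalizer rather than locally, and because it requires root-finding together with a minimization over all nodes. The resolution rests on two observations: the normalizer is only a quadratic in $\alpha$, so its finitely many coefficients can be transmitted by a single aggregate-and-broadcast; and the neighborhood invariant makes each per-node feasible set an interval anchored at $\alpha=0$, so the global maximal step is the node-wise minimum of the first-violation points and is recovered by min-aggregation. Collecting the counts, every line other than Step \ref{eq:ipm-theory:deltaw} uses a constant number of message-passing steps, while Step \ref{eq:ipm-theory:deltaw} uses $\cO(m)$, so the whole iteration is reproduced in $\cO(m)$ message-passing steps, as claimed.
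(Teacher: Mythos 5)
Your proposal is correct and follows essentially the same route as the paper's proof: a line-by-line simulation of \cref{alg:ipm-theory} in which the linear solve of Step~\ref{eq:ipm-theory:deltaw} is delegated to the conjugate-gradient MPNN of \cref{thm:cg-is-gnn}, and the step-length selection of Step~\ref{eq:ipm-theory:alpha} is handled exactly as in the paper---by expanding the neighborhood condition into per-node quadratics in $\alpha$, aggregating the finitely many global coefficients (your $g(\alpha)$ is precisely the paper's $t_1,t_2,t_3$ up to the factor $\gamma/(n+m)$) at the objective node, broadcasting them back, solving locally, and min-aggregating. Your packaging of the eight inner products as one coefficient triple, and your explicit appeal to the invariant $x_is_i\geq\gamma\mu$ to anchor the per-node feasible sets at $\alpha=0$, are minor refinements of the same argument rather than a different proof.
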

\begin{proof}
We need to check that every step can be computed by message-passing steps over $G(I)$.
\begin{itemize}
\item Step \ref{eq:ipm-theory:mu} can be written as $\vc{h}_1\leftarrow \vc{x}\tran\vc{s}$, $\vc{h}_2\leftarrow \vc{w}\tran\vc{r}$, $\mu = (\vc{h}_1+\vc{h}_2)/(n+m)$. These can be realized as a variable-to-objective message-passing step, a constraints-to-objective message-passing step, and a local operation on the objective node, respectively.
\item Step \ref{eq:ipm-theory:deltaw} can be written as message-passing steps by \Cref{thm:cg-is-gnn}.

\item Step \ref{eq:ipm-theory:deltax} can be broken down as follows. We can compute $\vc{h}_1\leftarrow\vc{A}\tran[\vc{w}+\Delta\vc{w}]$, $\vc{h}_2\leftarrow \mu\vc{1}_n$ and $\vc{h}_3\leftarrow \vc{c}$ by a constraints-to-variables and two objective-to-variables message-passing steps, respectively. Then, one can compute $\Delta\vc{x} \leftarrow \vc{D}(\vc{s})^{-1}\vc{D}(\vc{x})\vc{h}_1-\vc{h}_3+\sigma \vc{D}(\vc{x})^{-1}\vc{h}_2$ by a local operation on variable nodes.

\item Step \ref{eq:ipm-theory:deltas} can be realized by taking an objective-to-variables message-passing step $\vc{h}_1\leftarrow \mu\vc{1}_n$, and computing $\Delta \vc{s}\leftarrow \sigma\vc{D}(\vc{x})^{-1}\vc{h}_1-\vc{s}-\vc{D}(\vc{x})^{-1}\vc{D}(\vc{s})\Delta\vc{x}$.

\item Step  \ref{eq:ipm-theory:deltar} can be realized by taking an objective-to-constraints message-passing step $\vc{h}_1\leftarrow \mu\vc{1}_m$, and computing $\Delta \vc{r}\leftarrow \sigma\vc{D}(\vc{w})^{-1}\vc{h}_1-\vc{r}-\vc{D}(\vc{w})^{-1}\vc{D}(\vc{r})\Delta\vc{w}$.

 \item Step \ref{eq:ipm-theory:alpha} can be performed by message-passing steps as follows. We need to find the largest $\alpha \in (0,1)$ such that
\begin{align}
&(\vc{x}+\alpha\Delta \vc{x})_i(\vc{s}+\alpha\Delta \vc{s})_i \\
&\geq \gamma\frac{(\vc{x}+\alpha\Delta \vc{x})\tran (\vc{s}+\alpha\Delta \vc{s})+(\vc{w}+\alpha\Delta \vc{w})\tran (\vc{r}+\alpha\Delta \vc{r})}{n+m}
\label{eq:ipm-theory:alpha-i}
\end{align}
for every $i\in[n]$ and
\begin{align}
&(\vc{w}+\alpha\Delta \vc{w})_j(\vc{r}+\alpha\Delta \vc{r})_j\} \\
&\geq \gamma\frac{(\vc{x}+\alpha\Delta \vc{x})\tran (\vc{s}+\alpha\Delta \vc{s})+(\vc{w}+\alpha\Delta \vc{w})\tran (\vc{r}+\alpha\Delta \vc{r})}{n+m}
\label{eq:ipm-theory:alpha-j}
\end{align}
for every $j\in[m]$. Equivalently, for each $i\in[n]$, we can find the largest $\alpha_i<1$ such that \ref{eq:ipm-theory:alpha-i} holds, that is such that
\begin{align*}
&\alpha_i^2\left(\Delta x_i\Delta s_i-\gamma\frac{\Delta\vc{x}\tran\Delta\vc{s}+\Delta\vc{w}\tran\Delta\vc{r}}{n+m}\right) \\
&+\alpha_i\left(x_i\Delta s_i+\Delta x_i s_i-\gamma\frac{\vc{x}\tran\Delta\vc{s}+\Delta\vc{x}\tran\vc{s}
+\vc{w}\tran\Delta\vc{r}+\Delta\vc{w}\tran\vc{r}}{n+m}\right) \\
&+\left(x_is_i-\gamma\frac{\vc{x}\tran\vc{s}+\vc{w}\tran\vc{r}}{n+m}\right)
\geq 0
\end{align*}
holds; and similarly, find the largest $\bar{\alpha}_j<1$ such that \ref{eq:ipm-theory:alpha-j} holds, that is such that
\begin{align*}
&\bar{\alpha}_j^2\left(\Delta w_j\Delta r_j-\gamma\frac{\Delta\vc{x}\tran\Delta\vc{s}+\Delta\vc{w}\tran\Delta\vc{r}}{n+m}\right) \\
&+\bar{\alpha}_j\left(w_j\Delta r_j+\Delta w_j r_j-\gamma\frac{\vc{x}\tran\Delta\vc{s}+\Delta\vc{x}\tran\vc{s}
+\vc{w}\tran\Delta\vc{r}+\Delta\vc{w}\tran\vc{r}}{n+m}\right) \\
&+\left(w_jr_j-\gamma\frac{\vc{x}\tran\vc{s}+\vc{w}\tran\vc{r}}{n+m}\right)
\geq 0
\end{align*}
holds; then $\alpha=\min_{i,j}\{\alpha_i, \bar{\alpha}_j\}$. 

This can be computed by message-passing steps as follows. First, we can compute $\vc{h}_1\leftarrow \Delta\vc{x}\tran\Delta\vc{s}$, $\vc{h}_2\leftarrow \Delta\vc{x}\tran\vc{s}$, $\vc{h}_3\leftarrow \vc{x}\tran\Delta\vc{s}$ and $\vc{h}_4\leftarrow \vc{x}\tran\vc{s}$ by variable-to-objective message-passing steps; and similarly $\bar{\vc{h}}_1\leftarrow \Delta\vc{w}\tran\Delta\vc{r}$, $\bar{\vc{h}}_2\leftarrow \Delta\vc{w}\tran\vc{r}$, $\bar{\vc{h}}_3\leftarrow \vc{w}\tran\Delta\vc{r}$, $\bar{\vc{h}}_4\leftarrow \vc{w}\tran\vc{r}$ by constraints-to-objective message-passing steps. The quantities $t_1\leftarrow \gamma(\vc{h}_1+\bar{\vc{h}}_1)/(n+m)$, $t_2\leftarrow \gamma(\vc{h}_2+\vc{h}_3+\bar{\vc{h}}_2+\bar{\vc{h}_3})/(n+m)$ and $t_3\leftarrow \gamma(\vc{h}_4+\bar{\vc{h}}_4)/(n+m)$ can then be computed by local operations on the objective node, and returned to the variable and constraint nodes by objective-to-variables message-passing steps $\vc{t}_1\leftarrow t_1\vc{1}_n$, $\vc{t}_2\leftarrow t_2\vc{1}_n$, $\vc{t}_3\leftarrow t_3\vc{1}_n$ and objective-to-constraint message-passing steps $\vc{\bar{t}}_1\leftarrow t_1\vc{1}_n$, $\vc{\bar{t}}_2\leftarrow t_2\vc{1}_n$, $\vc{\bar{t}}_3\leftarrow t_3\vc{1}_n$. Then, on each variable node $v_i$, we can solve
\begin{align*}
\alpha_i = &\max \{\alpha\in(0,1)\;\vert\;\alpha^2(\Delta x_i\Delta s_i-t_{1i}) \\
&+\alpha(x_i\Delta s_i+\Delta x_i s_i-t_{2i})+(x_is_i-t_{3i})\geq 0\}
\end{align*}
as a local operation, and similarly, on each constraint node $c_j$, we can find
\begin{align*}
\bar{\alpha}_j = &\max \{\alpha\in(0,1)\;\vert\;\alpha^2(\Delta w_i\Delta r_i-\bar{t}_{1i}) \\
&+\alpha(w_i\Delta r_i+\Delta w_i r_i-\bar{t}_{2i})+(w_ir_i-\bar{t}_{3i})\geq 0\}
\end{align*}
as a local operation. Finally, we can compute $\alpha_v\leftarrow \min_i\alpha_i$ as a variables-to-objective message-passing step, and $\alpha_c\leftarrow \min_j\bar{\alpha}_j$ as a constraints-to-objective message-passing step, and finally take $\alpha=\min(\alpha_v, \alpha_c)$ as a local operation on the objective node.

\item Finally, step \ref{eq:ipm-theory:update} can be performed by taking objective-to-variables, objective-to-constraints message-passing steps $\vc{h}_1\leftarrow \alpha\vc{1}_n$, $\vc{\bar{h}}_1\leftarrow \alpha\vc{1}_m$, and taking local operations $(\vc{x},\vc{s})\leftarrow (\vc{x}+\vc{D}(\vc{h}_1)\Delta\vc{x}, \vc{s}+\vc{D}(\vc{h}_1)\Delta\vc{s})$ and $(\vc{w},\vc{r})\leftarrow (\vc{w}+\vc{D}(\vc{\bar{h}}_1)\Delta\vc{w}, \vc{r}+\vc{D}(\vc{\bar{h}}_1)\Delta\vc{r})$ on variable and constraint nodes respectively.
\end{itemize}

Counting the number of successive message-passing steps, we find that all steps can be realized in 23 message-passing steps, plus the $\cO(m)$ steps of Step \ref{eq:ipm-theory:deltaw}, completing the proof.
\end{proof}

Further, we show an analogous result for~\cref{alg:ipm-practice}. 

\begin{proposition}\label{thm:side_app}
There exists an MPNN $f_{\mathsf{MPNN}, \mathsf{IPM2}}$ composed of $\cO(m)$ message-passing steps that reproduces each iteration of \Cref{alg:ipm-practice}, in the sense that for any LP instance $I=(\vc{A},\vc{b},\vc{c})$ and any iteration step $t\geq0$, $f_{\mathsf{MPNN}, \mathsf{IPM2}}$ maps the graph $G(I)$ carrying $[\vc{x}_t, \vc{s}_t]$ on the variable nodes, $[\vc{w}_t, \vc{r}_t]$ on the constraint nodes and $[\mu_t]$ on the objective node to the same graph $G(I)$ carrying  $[\vc{x}_{t+1}, \vc{s}_{t+1}]$ on the variable nodes, $[\vc{w}_{t+1}, \vc{r}_{t+1}]$ on the constraint nodes and $[\mu_{t+1}]$ on the objective node.
\end{proposition}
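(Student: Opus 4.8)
The plan is to mirror the proof of \Cref{thm:main_app}, treating each line of \Cref{alg:ipm-practice} in turn and exhibiting a bounded number of message-passing steps on $G(I)$ that realize it, the only exception being the linear solve for $\Delta\vc{w}$, which is delegated to \Cref{thm:cg-is-gnn} and accounts for the $\cO(m)$ term. The key structural simplification relative to \Cref{alg:ipm-theory} is that the barrier parameter $\mu$ is now carried as a feature on the objective node instead of being recomputed at each iteration; hence the opening line of the theoretical algorithm, which set $\mu=(\vc{x}\tran\vc{s}+\vc{w}\tran\vc{r})/(n+m)$ through two aggregations and a local operation, disappears, and $\mu_t$ is simply read off the objective node wherever it is needed (in particular inside the conjugate-gradient solve of \Cref{thm:cg-is-gnn}).

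For the search-direction computations in Steps~\ref{eq:ipm-practice:deltaw}--\ref{eq:ipm-practice:deltar}, I would reuse verbatim the realizations from the proof of \Cref{thm:main_app}, since the defining formulas for $\Delta\vc{w}$, $\Delta\vc{x}$, $\Delta\vc{s}$ and $\Delta\vc{r}$ are identical across the two algorithms. Concretely, $\Delta\vc{w}$ comes from \Cref{thm:cg-is-gnn}; $\Delta\vc{x}$ from one constraints-to-variables and two objective-to-variables passes followed by a local operation on the variable nodes; and $\Delta\vc{s}$, $\Delta\vc{r}$ each from a single objective-to-variables, respectively objective-to-constraints, pass followed by a local operation.

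The step-length determination in Step~\ref{eq:ipm-practice:alpha} is in fact lighter than its theoretical counterpart. Because the admissibility condition is pure positivity, $(\vc{x}+\alpha\Delta\vc{x})_i(\vc{s}+\alpha\Delta\vc{s})_i \geq 0$ and $(\vc{w}+\alpha\Delta\vc{w})_j(\vc{r}+\alpha\Delta\vc{r})_j \geq 0$, rather than the $\gamma$-neighborhood inequality, there is no global duality term coupling the nodes. Each variable node therefore solves the scalar quadratic $\alpha^2\Delta x_i\Delta s_i + \alpha(x_i\Delta s_i + \Delta x_i s_i) + x_i s_i \geq 0$ for its largest admissible $\alpha_i$ as a purely local operation (and symmetrically each constraint node finds $\bar\alpha_j$), dispensing entirely with the broadcasting of the global quantities $t_1, t_2, t_3$ needed in \Cref{thm:main_app}. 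A variables-to-objective and a constraints-to-objective message pass with min-aggregation, followed by a local operation on the objective node, then produce $\alpha=\min_{i,j}\{\alpha_i, \bar\alpha_j\}$.

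The final two lines are immediate. Step~\ref{eq:ipm-practice:update} is realized, exactly as in \Cref{thm:main_app}, by objective-to-variables and objective-to-constraints broadcasts of $\alpha$ followed by the local updates $(\vc{x},\vc{s}) \leftarrow (\vc{x}+0.99\alpha\Delta\vc{x},\, \vc{s}+0.99\alpha\Delta\vc{s})$ and $(\vc{w},\vc{r}) \leftarrow (\vc{w}+0.99\alpha\Delta\vc{w},\, \vc{r}+0.99\alpha\Delta\vc{r})$, the constant $0.99$ being absorbed into the local operation; and Step~\ref{eq:ipm-practice:mu} is a single local multiplication by $\sigma$ on the objective node, yielding $\mu_{t+1}=\sigma\mu_t$ there. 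Summing the steps shows the whole iteration uses a bounded number of message passes plus the $\cO(m)$ steps of the conjugate-gradient solve, giving $\cO(m)$ in total. As in the theoretical case, the only genuine obstacle is the linear solve for $\Delta\vc{w}$, which is already discharged by \Cref{thm:cg-is-gnn}; every remaining line is routine and in fact strictly simpler than its \Cref{alg:ipm-theory} analogue.
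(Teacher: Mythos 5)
Your proposal is correct and follows essentially the same route as the paper's proof: delegating the linear solve to \Cref{thm:cg-is-gnn}, reusing the realizations of Steps~\ref{eq:ipm-practice:deltaw}--\ref{eq:ipm-practice:deltar} from \Cref{thm:main_app}, handling the step length via per-node scalar quadratics with min-aggregation to the objective node (correctly noting it is simpler than the $\gamma$-neighborhood case since no global terms $t_1,t_2,t_3$ need broadcasting), and treating the update and $\mu\leftarrow\sigma\mu$ as broadcasts plus local operations. Your additional observation that the recomputation of $\mu$ from the theoretical algorithm disappears because $\mu_t$ is stored on the objective node is accurate and consistent with the paper's statement.
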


\begin{proof}
We need to show that every step can be written as message-passing steps over $G(I)$, as in Theorem \ref{thm:main}. Steps \ref{eq:ipm-practice:deltaw}-\ref{eq:ipm-practice:deltar} are the same as \ref{eq:ipm-theory:deltaw}-\ref{eq:ipm-theory:deltar}, so by Theorem \ref{thm:main}, they can be written as message-passing steps. 
This leaves to check steps \ref{eq:ipm-practice:alpha}-\ref{eq:ipm-practice:mu}.

\begin{itemize}
\item The analysis of Step \ref{eq:ipm-practice:alpha} is similar to the analysis of Step \ref{eq:ipm-theory:alpha} of Algorithm \ref{alg:ipm-theory} in Theorem \ref{thm:main}, but simpler. On every variable node, we can compute
\begin{align*}
\alpha_i = &\max \{\alpha\in(0,\infty)\;\vert\;\alpha^2\Delta x_i\Delta s_i \\
&+\alpha(x_i\Delta s_i+\Delta x_i s_i)+x_is_i\geq 0\}
\end{align*}
as a local operation, and similarly, on each constraint node $c_j$ we can find
\begin{align*}
\bar{\alpha}_j = &\max \{\alpha\in(0,\infty)\;\vert\;\alpha^2\Delta w_i\Delta r_i \\
&+\alpha(w_i\Delta r_i+\Delta w_i r_i)+w_ir_i\geq 0\}
\end{align*}
as a local operation. Then we can compute $\alpha_v\leftarrow \min_i\alpha_i$ as a variables-to-objective message-passing step, and $\alpha_c\leftarrow \min_j\bar{\alpha}_j$ as a constraints-to-objective message-passing step, and finally take $\alpha=\min(\alpha_v, \alpha_c)$ as a local operation on the objective node.
\item The analysis of step \ref{eq:ipm-practice:update} is similar to the analysis of Step \ref{eq:ipm-theory:update} of Algorithm \ref{alg:ipm-theory} in Theorem \ref{thm:main} as well. It can be performed by taking objective-to-variables, objective-to-constraints message-passing steps $\vc{h}_1\leftarrow \alpha\vc{1}_n$, $\vc{\bar{h}}_1\leftarrow \alpha\vc{1}_m$, and taking local operations $(\vc{x},\vc{s})\leftarrow (\vc{x}+0.99\vc{D}(\vc{h}_1)\Delta\vc{x}, \vc{s}+0.99\vc{D}(\vc{h}_1)\Delta\vc{s})$ and $(\vc{w},\vc{r})\leftarrow (\vc{w}+0.99\vc{D}(\vc{\bar{h}}_1)\Delta\vc{w}, \vc{r}+0.99\vc{D}(\vc{\bar{h}}_1)\Delta\vc{r})$ on variable and constraint nodes respectively.
\item Finally, step \ref{eq:ipm-practice:mu} is a local operation on the objective node.
\end{itemize}
Just like in \Cref{thm:main_app}, we can see all the computations can be realized in $\cO(m)$ message-passing steps, completing the proof.
\end{proof}

\section{Details of IPM-MPNNs}
\label{sec:conv_form}
In the following, we outline details with regard to the specific MPNN layers used in~\cref{sec:experiments}. We follow the notation outlined in~\cref{sec:ipm-mpnn}. Furthermore, let $\textsf{MLP}$ be a multi-layer perceptron, whose subscript denotes its role. Specifically, $\textsf{MLP}_{*}$ is for node initialization or node updating after gathering message function, $\textsf{MLP}_{* \rightarrow *}$ is for message vector mapping, and $\textsf{MLP}_{**}$ is for edge feature embedding in each layer. At the initialization $t=0$, we obtain node embeddings by
\begin{align*}
    \vc{h}_v^{(0)} &\coloneqq \textsf{MLP}_{\text{v}}^{(0)} \left( \vc{x}_v \right), \forall v \in V(I)\\
    \vc{h}_c^{(0)} &\coloneqq \textsf{MLP}_{\text{c}}^{(0)} \left( \vc{x}_c \right), \forall c \in C(I)\\
    \vc{h}_o^{(0)} &\coloneqq \textsf{MLP}_{\text{o}}^{(0)} \left( \vc{x}_o \right).
\end{align*}

Then, a GCN layer updates the constraint, objective, and variable nodes as follows:
{\allowdisplaybreaks
\begin{align*}
    \vc{h}_c^{(t)} \coloneqq \textsf{MLP}^{(t)}_{\text{c}}\biggl[ &\textsf{MLP}^{(t)}_{\text{c} \rightarrow \text{c}}\left(\vc{h}_c^{(t-1)} \right) + \\
     &\textsf{MLP}^{(t)}_{\text{o} \rightarrow \text{c}}\left(\dfrac{1}{\sqrt{d_o d_c}} \left( \vc{h}_o^{(t-1)} + \textsf{MLP}_{\text{oc}}^{(t)}\left(\vc{e}_{oc}\right) \right)\right) + \\
     &\textsf{MLP}^{(t)}_{\text{v} \rightarrow \text{c}}\left( \sum_{v \in {N}_c \cap V(I)} \dfrac{1}{\sqrt{d_v d_c}} \left(\vc{h}_v^{(t-1)} + \textsf{MLP}_{\text{vc}}^{(t)}\left(\vc{e}_{vc}\right)\right)   \right) \biggr] \\
    \vc{h}_o^{(t)} \coloneqq \textsf{MLP}^{(t)}_{\text{o}}\biggl[ &\textsf{MLP}^{(t)}_{\text{o} \rightarrow \text{o}}\left(\vc{h}_o^{(t-1)} \right) + \\
     &\textsf{MLP}^{(t)}_{\text{c} \rightarrow \text{o}}\left( \sum_{c \in C(I)} \dfrac{1}{\sqrt{d_o d_c}} \left( \vc{h}_c^{(t)} +\textsf{MLP}_{\text{co}}^{(t)}\left(\vc{e}_{co}\right) \right)\right) + \\
     &\textsf{MLP}^{(t)}_{\text{v} \rightarrow \text{o}}\left( \sum_{v \in V(I)} \dfrac{1}{\sqrt{d_o d_v}} \left( \vc{h}_v^{(t-1)} + \textsf{MLP}_{\text{vo}}^{(t)}\left(\vc{e}_{vo}\right)  \right) \right) \biggr]\\
     \vc{h}_v^{(t)} \coloneqq \textsf{MLP}^{(t)}_{\text{v}}\biggl[ &\textsf{MLP}^{(t)}_{\text{v} \rightarrow \text{v}}\left(\vc{h}_v^{(t-1)} \right) + \\
     &\textsf{MLP}^{(t)}_{\text{o} \rightarrow \text{v}}\left(\dfrac{1}{\sqrt{d_o d_c}} \left( \vc{h}_o^{(t)} + \textsf{MLP}_{\text{ov}}^{(t)}\left(\vc{e}_{ov}\right) \right) \right)+ \\
     &\textsf{MLP}^{(t)}_{\text{c} \rightarrow \text{v}}\left( \sum_{c \in {N}_v \cap C(I)} \dfrac{1}{\sqrt{d_c d_v}} \left(\vc{h}_c^{(t)} + \textsf{MLP}_{\text{cv}}^{(t)}\left(\vc{e}_{cv}\right)  \right) \right) \biggr].
\end{align*}}

Similarly, for the  GIN layer, we have:

{\allowdisplaybreaks
\begin{align*}
    \vc{h}_c^{(t)} \coloneqq \textsf{MLP}^{(t)}_{\text{c}}\biggl[&\left(1 + \epsilon_{c}^{(t)} \right) \textsf{MLP}^{(t)}_{\text{c} \rightarrow \text{c}}\left(\vc{h}_c^{(t-1)} \right) + \\
     &\textsf{MLP}^{(t)}_{\text{o} \rightarrow \text{c}}\left(\vc{h}_o^{(t-1)} + \textsf{MLP}_{\text{oc}}^{(t)}\left(\vc{e}_{oc}\right) \right) + \\
     &\textsf{MLP}^{(t)}_{\text{v} \rightarrow \text{c}}\left( \sum_{v \in {N}_c \cap V(I)} \left( \vc{h}_v^{(t-1)} + \textsf{MLP}_{\text{vc}}^{(t)}\left(\vc{e}_{vc}\right) \right) \right) \biggr] \\
    \vc{h}_o^{(t)} \coloneqq \textsf{MLP}^{(t)}_{\text{o}}\biggl[&\left(1 + \epsilon_{o}^{(t)} \right) \textsf{MLP}^{(t)}_{o \rightarrow o}\left(\vc{h}_o^{(t-1)} \right) + \\
     &\textsf{MLP}^{(t)}_{\text{c} \rightarrow \text{o}}\left( \sum_{c \in C(I)} \left( \vc{h}_c^{(t)} + \textsf{MLP}_{\text{co}}^{(t)}\left(\vc{e}_{co}\right) \right)\right) + \\
     &\textsf{MLP}^{(t)}_{\text{v} \rightarrow \text{o}}\left( \sum_{v \in V(I)} \left(\vc{h}_v^{(t-1)}+\textsf{MLP}_{\text{vo}}^{(t)}\left(\vc{e}_{vo}\right) \right) \right) \biggr]\\
     \vc{h}_v^{(t)} \coloneqq \textsf{MLP}^{(t)}_{\text{v}}\biggl[&\left(1 + \epsilon_{v}^{(t)} \right) \textsf{MLP}^{(t)}_{\text{v} \rightarrow \text{v}}\left(\vc{h}_v^{(t-1)} \right) + \\
     &\textsf{MLP}^{(t)}_{\text{o} \rightarrow \text{v}}\left(\vc{h}_o^{(t)} +\textsf{MLP}_{\text{ov}}^{(t)}\left(\vc{e}_{ov}\right) \right) + \\
     &\textsf{MLP}^{(t)}_{\text{c} \rightarrow \text{v}}\left( \sum_{c \in {N}_v \cap C(I)} \left( \vc{h}_c^{(t)} + \textsf{MLP}_{\text{cv}}^{(t)}\left(\vc{e}_{cv}\right) \right) \right) \biggr].
\end{align*}}

For the  GEN layer, we have:
{\allowdisplaybreaks
\begin{align*}
    \vc{h}_c^{(t)} \coloneqq \textsf{MLP}^{(t)}_{\text{c}}\biggl[&\textsf{MLP}^{(t)}_{\text{c} \rightarrow \text{c}}\left(\vc{h}_c^{(t-1)} \right) + \\
     &\textsf{MLP}^{(t)}_{\text{o} \rightarrow \text{c}}\left(\vc{h}_o^{(t-1)} + \textsf{MLP}_{\text{oc}}^{(t)}\left(\vc{e}_{oc}\right) + \epsilon_{\text{o} \rightarrow \text{c}}^{(t)} \right) + \\
     &\textsf{MLP}^{(t)}_{\text{v} \rightarrow \text{c}} \left( \textsf{MSG} \left( \left\{\!\!\left\{ \vc{h}_v^{(t-1)} + \textsf{MLP}_{\text{vc}}^{(t)}\left(\vc{e}_{vc}\right) + \epsilon_{\text{v} \rightarrow \text{c}}^{(t)} \mid v \in {N}_c \cap V(I) \right\}\!\!\right\} \right) \right) \biggr] \\
    \vc{h}_o^{(t)} \coloneqq \textsf{MLP}^{(t)}_{\text{o}}\biggl[&\textsf{MLP}^{(t)}_{\text{o} \rightarrow \text{o}}\left(\vc{h}_o^{(t-1)} \right) + \\
     &\textsf{MLP}^{(t)}_{\text{c} \rightarrow \text{o}}\left( \textsf{MSG} \left( \left\{\!\!\left\{ \vc{h}_c^{(t-1)}+ \textsf{MLP}_{\text{co}}^{(t)}\left(\vc{e}_{co}\right) + \epsilon_{\text{c} \rightarrow \text{o}}^{(t)} \mid c \in C(I) \right\}\!\!\right\} \right) \right) + \\
     &\textsf{MLP}^{(t)}_{\text{v} \rightarrow \text{o}}\left( \textsf{MSG} \left( \left\{\!\!\left\{ \vc{h}_c^{(t-1)} + \textsf{MLP}_{\text{vc}}^{(t)}\left(\vc{e}_{vc}\right) + \epsilon_{\text{v} \rightarrow \text{o}}^{(t)} \mid v \in V(I) \right\}\!\!\right\} \right) \right) \biggr]\\
     \vc{h}_v^{(t)} \coloneqq \textsf{MLP}^{(t)}_{\text{v}}\biggl[&\textsf{MLP}^{(t)}_{\text{v} \rightarrow \text{v}}\left(\vc{h}_v^{(t-1)} \right) + \\
     &\textsf{MLP}^{(t)}_{\text{o} \rightarrow \text{v}}\left(\vc{h}_o^{(t)} + \textsf{MLP}_{\text{ov}}^{(t)}\left(\vc{e}_{ov}\right) + \epsilon_{\text{o} \rightarrow \text{v}}^{(t)} \right) + \\
     &\textsf{MLP}^{(t)}_{\text{c} \rightarrow \text{v}}\left( \textsf{MSG} \left( \left\{\!\!\left\{ \vc{h}_c^{(t-1)}+ \textsf{MLP}_{\text{cv}}^{(t)}\left(\vc{e}_{cv}\right) + \epsilon_{\text{c} \rightarrow \text{v}}^{(t)} \mid c \in {N}_v \cap C(I) \right\}\!\!\right\} \right) \right) \biggr],
\end{align*}}
where  $\textsf{MSG}$ is the softmax aggregation with $\tau=1$, i.e.,
\begin{equation*}
    \textsf{softmax}(\mathcal{X} \mid \tau) = \sum_{\vc{x}_i\in\mathcal{X}}
        \frac{\exp(\tau \cdot\vc{x}_i)}{\sum_{\vc{x}_j\in\mathcal{X}}
        \exp(\tau \cdot\vc{x}_j)}\cdot\vc{x}_{i}.
\end{equation*}

\section{Details of datasets}\label{app:dataset}

In the following, we describe our datasets. 

\subsection{Combinatorial optimization problems}

In the following, we briefly the combinatorial optimization problems. 

\xhdr{Set cover problem} The set cover problem aims cover the universe $U$ with a collection of given subsets $S = \{S_1, S_2, \ldots, S_m\}$ satisfying $\cup_{i=1}^m S_i = U$, with the target of minimizing the objective function. Formally, let $x_i$ be the variable deciding whether subset $S_i$ is selected, and $c_i$ the cost per subset, we have:
\begin{equation*}
\begin{split}
    \min_{\vc{x}}   &\sum_{i=1}^m c_i x_i \\
\text{s.t.} &\sum_{i:u \in S_i} x_i \geq 1, \forall u \in U \\
    &x_i \in \{0, 1\}, \forall i \in [m].
\end{split}
\end{equation*}

\xhdr{Maximal independent set} Given an undirected graph $G$ with node set $V(G)$ and edge set $E(G)$, the goal of the maximal independent set problem is to find a set of nodes where no pairs of them are connected. If we use $x_i$ to denote a node $i$ is selected or not, we have:

\begin{equation*}
\begin{split}
    \max_{\vc{x}}  &\sum_{i \in V} x_i \\
\text{s.t. } & x_u + x_v \leq 1, \forall (u, v) \in E(G), u, v \in V(G) \\
    &x_i \in \{0, 1\}, \forall i \in V(G).
\end{split}
\end{equation*}

\xhdr{Combinatorial auction} Suppose there are a set of items $M$ and one of bidders $N$. Each bidder $i \in N$ maintains a set of bids $B_i$, each bid $b \in B_i$ is associated with a subset $S_{ib} \subseteq M$ of items and a value $v_{ib}$ that the bidder $i$ is willing to pay for this subset. The binary decision variable $x_{ib}$ is $1$ if the bid $b$ by bidder $i$ is accepted or $0$ otherwise. The MILP formulation of the problem is as follows:

\begin{equation}
\begin{split}
    \max_{\vc{x}} & \sum_{i \in N} \sum_{b \in B_i} v_{ib} x_{ib} \\
   \text{s.t.} & \sum_{i \in N} \sum_{b \in B_i : j \in S_{ib}} x_{ib} \leq 1, \forall j \in M, \\
        & x_{ib} \in \{0,1\}, \forall i \in N, b \in B_i.
\end{split}
\end{equation}

\xhdr{Capacitated facility location} Given a set of facilities $M$ and another of customers $N$, we aim to build facilities and satisfy the demand of the customers at minimum cost. Let $y_j, j \in M$ be the binary decision of building the facility $j$ or not, and $x_{ij}$ be a continuous variable indicating the fraction of demand facility $j$ sends to customer $i \in N$. Let $d_i \in \mathbb{R}^+$ be the amount of demand of customer $i$, and $v_j$ be the volume of the facility $j$, $c_{ij}$ the cost of shipment and $f_j$ the cost of building facility $j$, we formulate the problem as follows:

\begin{equation}
\begin{split}
    \min_{\vc{x}}  &\sum_{j \in M} f_j y_j + \sum_{i \in N} \sum_{j \in M} c_{ij} x_{ij} \\
    \text{s.t.} & \sum_{j \in M} x_{ij} = 1, \forall i  \in N \\
    & \sum_{i \in N} d_i x_{ij} \leq v_j y_j, \forall i  \in N, j \in M \\
    &y_j \in \{0, 1\}, x_{ij} \in [0, 1], \forall i  \in N, j \in M.
\end{split}
\end{equation}

\subsection{Generation of instances} 

We propose various sizes of generated instances; see~\cref{tab: setc_size,tab: indset_size,tab: cauc_size,tab: fac_size} for the size parameters of each dataset. The generation of instances follows the setting of \citet{Gas+2019}. For the set covering instances, problems are generated with 15-20 rows and columns and a constraint matrix density of 0.15 for mini instances. For small instances, we used 30-50 rows and 50-70 columns with a density of 0.05. Large instances have 300-500 rows and 500-700 columns with a density of 0.01. We employ the Erd\H{o}s--R'enyi random graph as the foundational graph when generating maximal independent set instances, designating 10-20 nodes for mini instances, 50-70 nodes for small instances, and 500-700 nodes for large instances. For combinatorial auction instances, we modulate the size by varying the number of items and bids: specifically, 20 items and bids are set for mini instances, 50-80 for small instances, and 500-800 for large instances. Lastly, for the capacitated facility location instances, we set 3-5 customers and facilities for mini instances, 10 for both in small instances, and 20-30 in large ones.

\begin{table*}[t]
\caption{Sizes of Setcover.}
\label{tab: setc_size}
\begin{center}
\begin{small}
\begin{tabular}{ccccc}
\toprule
Size & Num. Row & Num. Col. &  Density & Num. instances \\
\midrule
Mini & [15, 20] & [15, 20] & 0.15 & \num{1000} \\
Small & [30, 50] & [50, 70] & 0.05 & \num{10000}\\
Large & [300, 500] & [500, 700] & 0.01 & \num{10000}\\
\bottomrule
\end{tabular}
\end{small}
\end{center}
\end{table*}

\begin{table*}[t]
\caption{Sizes of maximal independent set instances.}
\label{tab: indset_size}
\begin{center}
\begin{small}
\begin{tabular}{cccc}
\toprule
Size & Num. nodes & Affinity &  Num. instances \\
\midrule
Mini & [10, 20] & 2 & \num{1000} \\
Small & [50, 70] & 2 & \num{10000}\\
Large & [500, 700] & 2 & \num{10000}\\
\bottomrule
\end{tabular}
\end{small}
\end{center}
\end{table*}

\begin{table*}[t]
\caption{Sizes of combinatorial auction instances.}
\label{tab: cauc_size}
% \vskip 0.15in
\begin{center}
\begin{small}
\begin{tabular}{cccc}
\toprule
Size & Num. items & Num. bids &  Num. instances \\
\midrule
Mini & 20 & 20 & \num{1000} \\
Small & $[50, 80]$ & $[50, 80]$ & \num{10000}\\
Large & $[300, 500]$ & $[300, 500]$ & \num{10000}\\
\bottomrule
\end{tabular}
\end{small}
\end{center}
\end{table*}

\begin{table*}[t]
\caption{Sizes of capacitated facility location instances.}
\label{tab: fac_size}
\begin{center}
\begin{small}
\begin{tabular}{ccccc}
\toprule
Size & Num. customers & Num. facilities & Ratio &  Num. instances \\
\midrule
Mini & [3, 5] & [3, 5] & 5 & \num{1000} \\
Small & 10 & 10 & 5 & \num{10000}\\
Large & [20, 30] & [20, 30] & 5 & \num{10000}\\
\bottomrule
\end{tabular}
\end{small}
\end{center}
\end{table*}

\section{Training parameters}
\label{sec:hyperparam}

For all the experiments, we train the neural networks with Adam optimizer with default hyperparamters, and run for at most 1000 epochs. During training, we leverage learning rate decay with right to the validation objective gap metric with a decay ratio of 0.5 and patience 50. We terminate the run at patience 100 epochs. Besides, we display the other task-specific hyperparameters in \cref{tab:hyperparams}, which are the batch size, number of MPNN layers as well as the number of sampled IPM solver steps, the step decay factor introduced in \cref{eq:main_loss}, the loss weight combination in \cref{eq:full_loss}, plus the weight decay of the optimizer. 

With regard to the bipartiteness ablation study, we also tune the hyperparameters for the sake of fair comparison. The choosen hyperparameters are listed in \cref{tab:hyperparams_bipart}.  Moreover, the hyperparameter configurations for the ODE approach baseline \citet{Wu+2023} are shown in~\cref{tab:hyperparams_ode}.

\begin{table*}[t]
\caption{Training hyperparameters of our tripartite MPNN main experiments, $v^a$ represents $v \times 10^a$. }
\label{tab:hyperparams}
\begin{center}
\resizebox{\textwidth}{!}{
\begin{small}
\begin{tabular}{ccccccccccc}
\toprule
Instances & Size & MPNN & Batch size & Num. layers & Hidden dim. & $\alpha$ & $w_{\text{var}}$ & $w_{\text{obj}}$ & $w_{\text{cons}}$ & Weight decay \\
\midrule
{\multirow{6}{*}{Setcover}} & {\multirow{3}{*}{Small}} 
& GEN & 512 & 8  & 180 & 0.2 & 1.2 & 0.8 & 0.2 & $1.2^{-6}$ \\
& & GCN & 512 & 8 & 180& 0.8 & 1.0 & 0.3 & 2.2 & $4.4^{-7}$ \\
& & GIN & 512 & 8 & 180 & 0.7 & 1.0 & 2.4 & 7.5 & $5.6^{-6}$ \\

& {\multirow{3}{*}{Large}} 
& GEN & 128 & 8  & 180 & 0.2 & 1.2 & 0.8 & 0.2 & $1.2^{-6}$ \\
& & GCN & 128 & 8 & 180 & 0.2 & 1.0 & 2.2 & 0.3 & $1.5^{-8}$ \\
& & GIN & 128 & 8 & 180 & 0.7 & 1.0 & 4.5 & 2.2 & $2.8^{-8}$ \\

\midrule
{\multirow{6}{*}{Indset}} & {\multirow{3}{*}{Small}} 
& GEN & 512 & 8  & 180 & 0.2 & 1.2 & 0.8 & 0.2 & $1.2^{-6}$ \\
& & GCN & 512 & 8 & 180& 0.5 & 1.0 & 4.5 & 9.6 & $2.0^{-7}$ \\
& & GIN & 512 & 8 & 180 & 0.7 & 1.0 & 2.4 & 7.5 & $5.6^{-6}$ \\

& {\multirow{3}{*}{Large}} 
& GEN & 128 & 8  & 180 & 0.2 & 1.2 & 0.8 & 0.2 & $1.2^{-6}$ \\
& & GCN & 128 & 8 & 180& 0.5 & 1.0 & 4.5 & 9.6 & $2.0^{-7}$ \\
& & GIN & 128 & 8 & 180 & 0.7 & 1.0 & 2.4 & 7.5 & $5.6^{-6}$ \\

\midrule
{\multirow{6}{*}{Cauc.}} & {\multirow{3}{*}{Small}} 
& GEN & 512 & 8  & 180 & 0.9 & 1.0 & 4.6 & 5.3 & 0.0 \\
& & GCN & 512 & 8 & 180 & 0.4 & 1.0 & 3.4 & 5.8 & 0.0 \\
& & GIN & 512 & 8 & 180 & 0.6 & 1.0 & 4.3 & 6.3 & 0.0 \\

& {\multirow{3}{*}{Large}} 
& GEN & 128 & 8  & 180 & 0.9 & 1.0 & 4.6 & 5.3 & 0.0 \\
& & GCN & 128 & 8 & 180& 0.4 & 1.0 & 3.4 & 5.8 & 0.0 \\
& & GIN & 128 & 8 & 180 & 0.6 & 1.0 & 4.3 & 6.3 & 0.0 \\

\midrule
{\multirow{6}{*}{Fac.}} & {\multirow{3}{*}{Small}} 
& GEN & 512 & 8  & 180 & 0.8 & 1.0 & 3.0 & 8.2 & $3.8^{-6}$ \\
& & GCN & 512 & 8 & 96 & 0.6 & 1.0 & 8.7 & 9.6 & $4.5^{-7}$  \\
& & GIN & 512 & 8 & 180 & 0.8 & 1.0 & 1.3 & 4.6 & $1.0^{-7}$ \\

& {\multirow{3}{*}{Large}} 
& GEN & 128 & 8  & 180 & 0.8 & 1.0 & 3.09 & 8.2 & $3.8^{-6}$ \\
& & GCN & 128 & 8 & 96 & 0.6 & 1.0 & 8.7 & 9.6 & $4.5^{-7}$ \\
& & GIN & 128 & 8 & 180 & 0.9 & 1.0 & 2.5 & 4.0 & $1.0^{-5}$ \\

\bottomrule
\end{tabular}
\end{small}
}
\end{center}
\end{table*}

\begin{table*}[t]
\caption{Training hyperparameters of our bipartite MPNN ablation experiments, $v^a$ represents $v \times 10^a$. }
\label{tab:hyperparams_bipart}
\begin{center}
\resizebox{\textwidth}{!}{
\begin{small}
\begin{tabular}{ccccccccccc}
\toprule
Instances & Size & MPNN & Batch size & Num. layers & Hidden dim. & $\alpha$ & $w_{\text{var}}$ & $w_{\text{obj}}$ & $w_{\text{cons}}$ & Weight decay \\
\midrule
{\multirow{6}{*}{Setcover}} & {\multirow{3}{*}{Small}} 
& GEN & 512 & 8  & 32 & 0.8 & 1.0 & 2.6 & 0.8  & $1.0^{-6}$ \\
& & GCN  & 512 & 8  & 32 & 0.9 & 1.0 & 5.5  & 1.1 & $1.0^{-5}$ \\
& & GIN & 512 & 8  & 64 & 0.3 & 1.0 & 4.7  & 0.8 & $1.0^{-5}$ \\

& {\multirow{3}{*}{Large}} 
& GEN & 128 & 8  & 32 & 0.8 & 1.0 & 2.6 & 0.8  & $1.0^{-6}$ \\
& & GCN  & 128 & 8  & 32 & 0.9 & 1.0 & 5.5  & 1.1 & $1.0^{-5}$ \\
& & GIN & 128 & 8  & 64 & 0.3 & 1.0 & 4.7  & 0.8 & $1.0^{-5}$ \\

\midrule
{\multirow{6}{*}{Indset}} & {\multirow{3}{*}{Small}} 
& GEN & 512 & 8 & 180 & 0.6 &  1.0 & 4.7 & 2.0  & 0.0 \\
& & GCN & 512 & 8  & 96 & 0.7 & 1.0 & 6.3  & 3.1 & 0.0 \\
& & GIN & 512 & 8  & 180 & 0.7 & 1.0 & 2.4  & 7.5 & $5.6^{-6}$ \\

& {\multirow{3}{*}{Large}} 
& GEN & 128 & 8 & 180 & 0.6 &  1.0 & 4.7 & 2.0  & 0.0 \\
& & GCN & 128 & 8  & 96 & 0.7 & 1.0 & 6.3  & 3.1 & 0.0 \\
& & GIN & 128 & 8  & 180 & 0.7 & 1.0 & 2.4  & 7.5 & $5.6^{-6}$ \\

\midrule
{\multirow{6}{*}{Cauc.}} & {\multirow{3}{*}{Small}} 
& GEN & 512 & 8  & 128 & 0.5 & 1.0 & 6.2  & 6.6 & $1.0^{-7}$ \\
& & GCN & 512 & 8  & 128 & 0.6 & 1.0 & 4.7  & 4.3 & $1.0^{-7}$ \\
& & GIN & 512 & 8  & 128 & 0.4 & 1.0 & 6.2  & 4.1 & $1.2^{-8}$ \\

& {\multirow{3}{*}{Large}} 
& GEN & 128 & 8  & 128 & 0.5 & 1.0 & 6.2  & 6.6 & $1.0^{-7}$ \\
& & GCN & 128 & 8  & 128 & 0.6 & 1.0 & 4.7  & 4.3 & $1.0^{-7}$ \\
& & GIN & 128 & 8  & 128 & 0.4 & 1.0 & 6.2  & 4.1 & $1.2^{-8}$ \\

\midrule
{\multirow{6}{*}{Fac.}} & {\multirow{3}{*}{Small}} 
& GEN & 512 & 8  & 128 & 0.9 & 1.0 & 2.9  & 2.5 & $1.0^{-7}$ \\
& & GCN & 512 & 8  & 96 & 0.7 & 1.0 & 5.3  & 3.8 & 0.0 \\
& & GIN & 512 & 8  & 180 & 0.8 & 1.0 & 1.3  & 4.6 & $1.0^{-7}$ \\

& {\multirow{3}{*}{Large}} 
& GEN & 128 & 8  & 128 & 0.9 & 1.0 & 2.9  & 2.5 & $1.0^{-7}$ \\
& & GCN & 128 & 8  & 96 & 0.7 & 1.0 & 5.3  & 3.8 & 0.0 \\
& & GIN & 128 & 8  & 180 & 0.8 & 1.0 & 1.3  & 4.6 & $1.0^{-7}$ \\

\bottomrule
\end{tabular}
\end{small}
}
\end{center}
\end{table*}

\begin{table*}[t]
\caption{Training hyperparameters of the ODE approach baseline. The experiments are done on the mini-sized instances with GEN-based MPNNs, $v^a$ represents $v \times 10^a$. }
\label{tab:hyperparams_ode}
\begin{center}
\resizebox{\textwidth}{!}{
\begin{small}
\begin{tabular}{ccccccccccc}
\toprule
Instances & MPNN & Candidate & Batch size & Num. layers & Hidden dim. & $\alpha$ & $w_{\text{var}}$ & $w_{\text{obj}}$ & $w_{\text{cons}}$ & Weight decay \\
\midrule
{\multirow{6}{*}{Setcover}} & {\multirow{2}{*}{GEN}}
& Ours & 512 & 3  & 128 & 0.3 & 1.0 & 3.5  & 1.3 & $3.3^{-3}$ \\
& & Baseline & 8 & 3  &  128 & -&- & -  & - & 0.0 \\
\cmidrule{2-11}
& {\multirow{2}{*}{GCN}}
& Ours & 512 & 3  & 180 & 0.8 & 1.0 & 6.1  & 1.6 & $1.0^{-6}$ \\
& & Baseline & 8 & 3  & 180 & - & - & -  & - & 0.0 \\
\cmidrule{2-11}
& {\multirow{2}{*}{GIN}}
& Ours & 512 & 3  & 180 & 0.4 & 1.0 & 3.8  & 1.4 & $5.5^{-6}$ \\
& & Baseline & 8 & 3  & 180 & - & - & -  & - & 0.0 \\

\midrule
{\multirow{6}{*}{Indset}} & {\multirow{2}{*}{GEN}}
& Ours & 512 & 3  & 128 & 0.4 & 1.0 & 7.1  & 6.2 & $1.0^{-6}$ \\
& & Baseline & 8 & 3 & 128 &- & - & -  & - & 0.0 \\
\cmidrule{2-11}
& {\multirow{2}{*}{GCN}}
& Ours & 512 & 3  & 180 & 0.8 & 1.0 & 3.5  & 5.6 & $2.1^{-6}$ \\
& & Baseline & 8 & 3  & 180 & - & - & -  & - & 0.0 \\
\cmidrule{2-11}
& {\multirow{2}{*}{GIN}}
& Ours & 512 & 3  & 180 & 0.4 & 1.0 & 5.9  & 3.9 & $9.7^{-6}$ \\
& & Baseline & 8 & 3  & 180 & - & - & -  & - & 0.0 \\

\midrule
{\multirow{6}{*}{Cauc}}& {\multirow{2}{*}{GEN}}
& Ours & 512 & 3  & 128 & 0.8 & 1.0 & 9.6  & 7.1 & $8.0^{-5}$ \\
& & Baseline & 8 & 3  & 128 & -& - & -  & - & 0.0 \\
\cmidrule{2-11}
& {\multirow{2}{*}{GCN}}
& Ours & 512 & 3  & 180 & 0.7 & 1.0 & 4.5  & 5.0 & $3.7^{-6}$ \\
& & Baseline & 8 & 3  & 180 & - & - & -  & - & 0.0 \\
\cmidrule{2-11}
& {\multirow{2}{*}{GIN}}
& Ours & 512 & 3  & 128 & 0.9 & 1.0 & 6.4  & 5.0 & $1.0^{-6}$ \\
& & Baseline & 8 & 3  & 128 & - & - & -  & - & 0.0 \\

\midrule
{\multirow{6}{*}{Fac}}& {\multirow{2}{*}{GEN}}
& Ours & 512 & 3  & 128 & 0.7 & 1.0 & 5.3 & 0.8 & $1.7^{-6}$ \\
& & Baseline & 8 & 3 & 128 & - & - & -  & - & 0.0 \\
\cmidrule{2-11}
& {\multirow{2}{*}{GCN}}
& Ours & 512 & 3  & 128 & 0.8 & 1.0 & 2.9  & 3.7 & $9.2^{-6}$ \\
& & Baseline & 8 & 3  & 128 & - & - & -  & - & 0.0\\
\cmidrule{2-11}
& {\multirow{2}{*}{GIN}}
& Ours & 512 & 3  & 180 & 0.8 & 1.0 & 1.8  & 1.5 & $3.6^{-7}$ \\
& & Baseline & 8 & 3  & 180 & - & - & -  & - & 0.0 \\

\bottomrule
\end{tabular}
\end{small}
}
\end{center}
\end{table*}

\section{Extended experimental results}

\begin{table*}[t]
\caption{Size generalization. We report the relative objective gap and constraint violation on larger test instances. Numbers represent mean and standard deviation across multiple pretrained models.}
\label{tab:size_gen}
\begin{center}
\resizebox{\textwidth}{!}{
 \begin{small}
\begin{tabular}{ccccccccccc}
\toprule

 & \multicolumn{2}{c}{\textbf{Train size}} & \multicolumn{2}{c}{\textbf{Inference size}} &  \multicolumn{2}{c}{\textbf{GEN}} & \multicolumn{2}{c}{\textbf{GCN}} & \multicolumn{2}{c}{\textbf{GIN}} \\
 
 & Rows & Cols & Rows & Cols &  Obj. (\%) & Cons. &  Obj. (\%) & Cons. &  Obj. (\%) & Cons. \\
 
\midrule
{\multirow{5}{*}{\rotatebox[origin=c]{90}{\scriptsize Setc.}}} & {\multirow{5}{*}{$[300, 500]$}} & {\multirow{5}{*}{$[500, 700]$}}

 & 500 & 700 & 0.717\scriptsize$\pm$0.158 & 0.516\scriptsize$\pm$0.010  & 0.511\scriptsize$\pm$0.047 & 0.509\scriptsize$\pm$0.004  & 1.034\scriptsize$\pm$0.237 & 0.486\scriptsize$\pm$0.023 \\
  & & & 550 & 750 & 0.917\scriptsize$\pm$0.317 & 0.552\scriptsize$\pm$0.012 & 0.871\scriptsize$\pm$0.252 & 0.543\scriptsize$\pm$0.003 & 2.318\scriptsize$\pm$1.411 & 0.497\scriptsize$\pm$0.032 \\
& & & 600 & 700 & 0.993\scriptsize$\pm$0.211 & 0.573\scriptsize$\pm$0.015 & 0.705\scriptsize$\pm$0.125 & 0.565\scriptsize$\pm$0.012 & 1.491\scriptsize$\pm$0.512 & 0.521\scriptsize$\pm$0.045 \\
& & & 500 & 800 & 0.902\scriptsize$\pm$0.323 & 0.528\scriptsize$\pm$0.008 & 1.058\scriptsize$\pm$0.441 & 0.509\scriptsize$\pm$0.004 & 12.538\scriptsize$\pm$16.027 & 0.485\scriptsize$\pm$0.050 \\
 & & & 600 & 800 & 1.004\scriptsize$\pm$0.407 & 0.589\scriptsize$\pm$0.014  & 1.556\scriptsize$\pm$0.588 & 0.568\scriptsize$\pm$0.005  & 12.217\scriptsize$\pm$14.715 & 0.486\scriptsize$\pm$0.071 \\

\midrule

{\multirow{5}{*}{\rotatebox[origin=c]{90}{\scriptsize Indset.}}} & {\multirow{5}{*}{$[584, 990]$}} & {\multirow{5}{*}{$[300, 500]$}}

 & $[978, 994]$ & 500 & 0.128\scriptsize$\pm$0.027 & 0.299\scriptsize$\pm$0.001  & 0.099\scriptsize$\pm$0.008 & 0.303\scriptsize$\pm$0.001  & 0.129\scriptsize$\pm$0.031 & 0.304\scriptsize$\pm$0.001 \\
  & & & $[1028, 1044]$ & 525 & 0.157\scriptsize$\pm$0.063 & 0.300\scriptsize$\pm$0.001  & 0.101\scriptsize$\pm$0.013 & 0.304\scriptsize$\pm$0.001 & 0.111\scriptsize$\pm$0.017 & 0.305\scriptsize$\pm$0.001 \\
  & & & $[1076, 1094]$ & 550 & 0.300\scriptsize$\pm$0.186 & 0.301\scriptsize$\pm$0.002 & 0.096\scriptsize$\pm$0.022 & 0.303\scriptsize$\pm$0.001 & 0.177\scriptsize$\pm$0.097 & 0.304\scriptsize$\pm$0.001\\
  & & & $[1128, 1144]$ & 575 & 1.402\scriptsize$\pm$1.036 & 0.305\scriptsize$\pm$0.006 & 0.146\scriptsize$\pm$0.044 & 0.304\scriptsize$\pm$0.001 & 0.380\scriptsize$\pm$0.367 & 0.304\scriptsize$\pm$0.002 \\
 & & & $[1178, 1194]$ & 600 & 4.552\scriptsize$\pm$3.153 & 0.317\scriptsize$\pm$0.015  & 0.408\scriptsize$\pm$0.317 & 0.304\scriptsize$\pm$0.001  & 0.647\scriptsize$\pm$0.725 & 0.304\scriptsize$\pm$0.002 \\

\midrule

{\multirow{5}{*}{\rotatebox[origin=c]{90}{\scriptsize Cauc.}}} & {\multirow{5}{*}{$[320, 562]$}} & {\multirow{5}{*}{$[300, 499]$}}

 & $[530, 564]$ & 500 & 0.333\scriptsize$\pm$0.134 & 0.257\scriptsize$\pm$0.001  & 0.318\scriptsize$\pm$0.048 & 0.259\scriptsize$\pm$0.001 & 0.344\scriptsize$\pm$0.108 & 0.259\scriptsize$\pm$0.001 \\
 & & & $[596, 646]$ & 500 & 0.363\scriptsize$\pm$0.131 & 0.267\scriptsize$\pm$0.002 & 0.519\scriptsize$\pm$0.069 & 0.270\scriptsize$\pm$0.003 & 0.576\scriptsize$\pm$0.165 & 0.271\scriptsize$\pm$0.001 \\
  & & & $[652, 720]$ & 500 & 0.524\scriptsize$\pm$0.039 & 0.284\scriptsize$\pm$0.001 & 1.255\scriptsize$\pm$0.523 & 0.289\scriptsize$\pm$0.007 & 0.944\scriptsize$\pm$0.114 & 0.289\scriptsize$\pm$0.001 \\
  & & & $[559, 596]$ & 600 & 7.325\scriptsize$\pm$3.615 & 0.257\scriptsize$\pm$0.002 & 0.587\scriptsize$\pm$0.268 & 0.255\scriptsize$\pm$0.001 & 1.014\scriptsize$\pm$0.845 & 0.263\scriptsize$\pm$0.006 \\
 & & & $[633, 677]$ & 600 & 7.965\scriptsize$\pm$3.941 & 0.263\scriptsize$\pm$0.002  & 0.868\scriptsize$\pm$0.441 & 0.258\scriptsize$\pm$0.003  & 1.375\scriptsize$\pm$0.693 & 0.269\scriptsize$\pm$0.005 \\

\midrule

{\multirow{5}{*}{\rotatebox[origin=c]{90}{\scriptsize Fac.}}} & {\multirow{5}{*}{$[441, 900]$}} & {\multirow{5}{*}{$[420, 870]$}}
 & 961 & 930 & 0.912\scriptsize$\pm$0.251 & 0.178\scriptsize$\pm$0.006  & 1.154\scriptsize$\pm$0.206 & 0.173\scriptsize$\pm$0.007  & 1.452\scriptsize$\pm$0.528 & 0.178\scriptsize$\pm$0.003 \\
 & & & 936 & 900 & 1.320\scriptsize$\pm$0.347 & 0.148\scriptsize$\pm$0.009 & 1.615\scriptsize$\pm$0.322 & 0.145\scriptsize$\pm$0.009 & 1.736\scriptsize$\pm$0.558 & 0.153\scriptsize$\pm$0.004 \\
 & & & 936 &  910 & 0.964\scriptsize$\pm$0.063 & 0.209\scriptsize$\pm$0.005 & 1.538\scriptsize$\pm$0.526 & 0.211\scriptsize$\pm$0.007 & 1.538\scriptsize$\pm$0.422 & 0.215\scriptsize$\pm$0.006 \\
 & & & 1116 & 1080 & 1.502\scriptsize$\pm$0.704 & 0.163\scriptsize$\pm$0.009 & 3.540\scriptsize$\pm$3.134 & 0.161\scriptsize$\pm$0.006 & 2.288\scriptsize$\pm$0.659 & 0.167\scriptsize$\pm$0.005 \\
 & & & 1296 & 1260 & 1.808\scriptsize$\pm$0.566 & 0.173\scriptsize$\pm$0.009 & 7.629\scriptsize$\pm$7.577 & 0.179\scriptsize$\pm$0.008 & 13.522\scriptsize$\pm$8.027 & 0.163\scriptsize$\pm$0.021 \\

\bottomrule
\end{tabular}
\end{small}
}
\end{center}
\end{table*}

We provide the training time per epoch and maximal GPU memory usage as supplementary results of \cref{tab: main_table} in \cref{tab:mem_usage}.

\begin{table*}[t]
\caption{Results of our proposed tripartite MPNNs versus bipartite ablations. We report supplementary results of training time per epoch in seconds and the maximal GPU memory allocated in GB.}
\label{tab:mem_usage}
\begin{center}
\resizebox{0.65\textwidth}{!}{
\begin{tabular}{ccccccc|cccc}
\toprule
 & {\multirow{2}{*}{\scriptsize Tri.}} &{\multirow{2}{*}{\scriptsize MPNN}} &  \multicolumn{4}{c}{\textbf{Small instances}} &  \multicolumn{4}{c}{\textbf{Large instances}} \\
 
 & & &  Setcover & Indset & Cauc & Fac &  Setcover & Indset & Cauc & Fac \\

\midrule
{\multirow{6}{*}{\rotatebox[origin=c]{90}{\scriptsize T.(sec)}}}  & {\multirow{3}{*}{\scriptsize \cmark}}
& GEN & 9.057 & 11.652 & 16.190 & 14.910  & 78.193 & 79.035 & 75.225 & 76.166 \\
& & GCN & 6.985 & 7.128 & 7.820 & 7.093  & 26.249 & 36.680 & 31.067 & 21.241 \\
& & GIN & 6.809 & 6.847 & 7.839 & 9.260  & 33.815 & 27.751 & 31.587 & 26.378 \\
\cmidrule{2-11}

& {\multirow{2}{*}{\scriptsize \xmark}}
& GEN & 6.598 & 6.812 & 7.176 & 7.170 & 10.066 & 34.914 & 26.692 & 27.133 \\
& & GCN & 6.560 & 6.665 & 6.899 & 6.742 & 9.694 & 10.788 & 12.267 & 12.285 \\
& & GIN & 6.653 & 6.625 & 6.935 & 6.867 & 9.514 & 13.975 & 11.622 & 21.778 \\

\midrule

{\multirow{6}{*}{\rotatebox[origin=c]{90}{\scriptsize Mem.(GB)}}}   & {\multirow{3}{*}{\scriptsize \cmark}}
& GEN & 23.399 & 32.227 & 51.237 & 48.900  & 67.995 & 69.427 & 65.875 & 71.852\\
& & GCN & 8.643 & 12.669 & 15.597 & 11.297  & 22.064 & 27.086 & 20.288 & 16.246\\
& & GIN & 7.318 & 10.217 & 17.611 & 23.906  & 18.966 & 21.781 & 22.902 & 24.560\\
\cmidrule{2-11}

& {\multirow{2}{*}{\scriptsize \xmark}}
& GEN & 2.462 & 19.098 & 26.031 & 28.096 & 8.104 & 41.103 & 33.250 & 35.623 \\
& & GCN & 1.015 & 4.255 & 8.233 & 7.292 & 2.546 & 9.088 & 9.386 & 10.511 \\
& & GIN & 2.051 & 5.808 & 8.642 & 14.339 & 4.150 & 12.416 & 9.844 & 20.740 \\

\bottomrule
\end{tabular}
}
\end{center}
\end{table*}

\section{IPM solver benchmark}

In this section, we run our Python-based customized IPM solver described in~\cref{alg:ipm-practice} compared with the SciPy official IPM solver on the same set of instances and benchmark the gap between the solvers.

We show the relative and absolute variable value gaps with right to the optimal variable values solved by the SciPy solver, as well as the relative and absolute objective gaps with right to the objective values given by the SciPy solver. For the variable gaps, we present all the variable values of each instance, while the objective gaps are naturally one scalar per instance. Without loss of generality, we pick 100 small set covering instances. The results are displayed in \cref{fig:small_setc}. We can see that the absolute variable gaps are overall satisfyingly small, with the largest value $1.76 \times 10^{-5}$. The relative variable gaps seem larger, as much as $90.28\%$. However, those large gaps correspond to rather small absolute variable values, introducing little numerical influence to the solution. Besides,~\cref{fig:small_setc} shows that the absolute objective gaps are at $1 \times 10 ^{-5}$ level, while the relative gaps are at most $1.61 \times 10 ^{-4} \%$, or even smaller. 

\begin{figure}[t]
  \begin{center}
      \centering
      \includegraphics[width=0.9\linewidth]{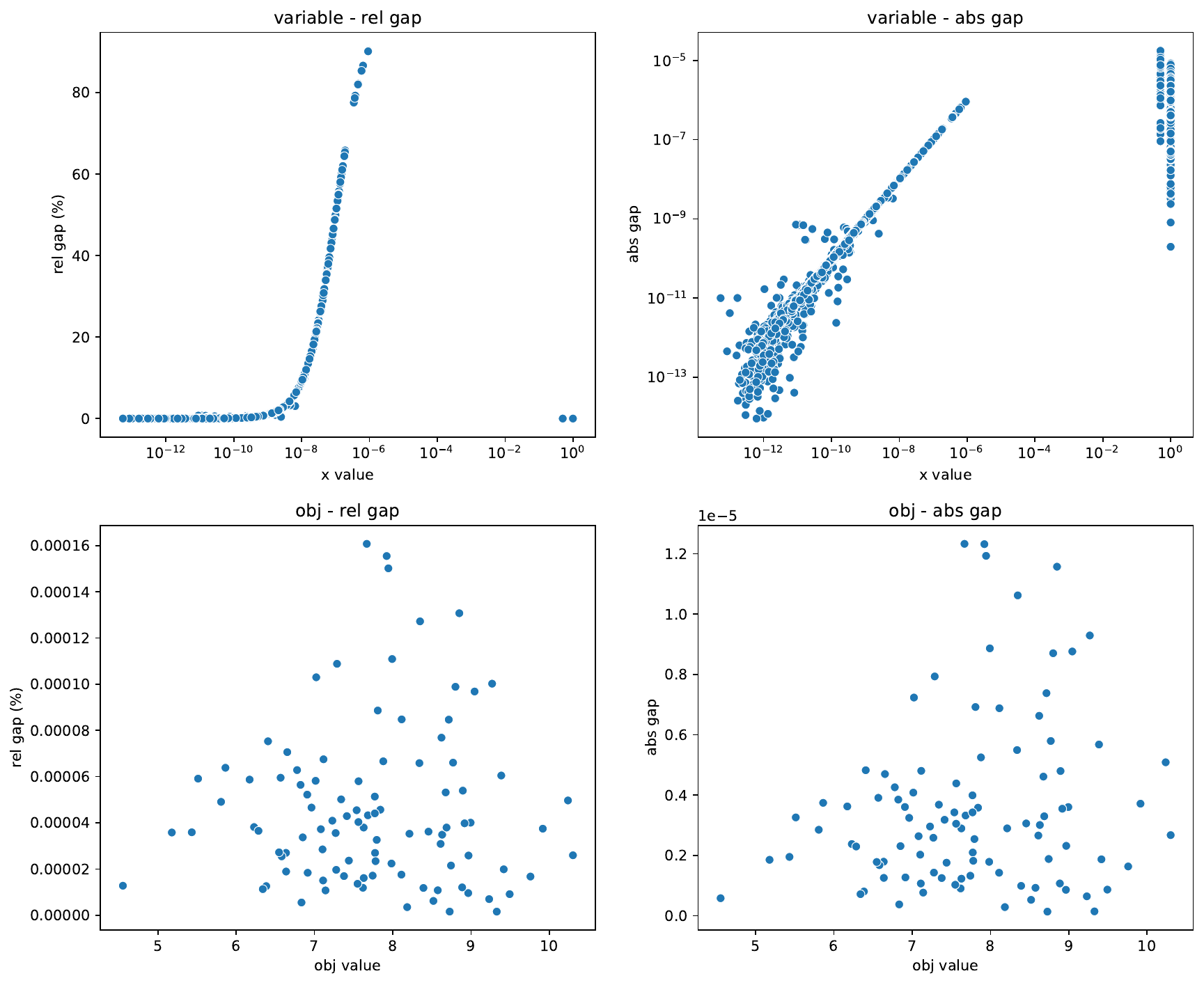}
      \caption{Solver benchmarking on small set covering relaxation instances.}
      \label{fig:small_setc}
  \end{center}
\end{figure}

Let us take a look at another case of small maximal independent set instances. In \cref{fig:small_indset}, the variable gaps at large ground truth variable values are surprisingly high, with a 0.19 absolute value gap and $55.29\%$ relative gap. However, the objective gaps remain low. In light of verifying the correctness of the constraints and the variable range, we conclude that our solver converges to alternative optimal solutions when multiple solutions exist.

\begin{figure}[t]
  \begin{center}
      \centering
      \includegraphics[width=0.9\linewidth]{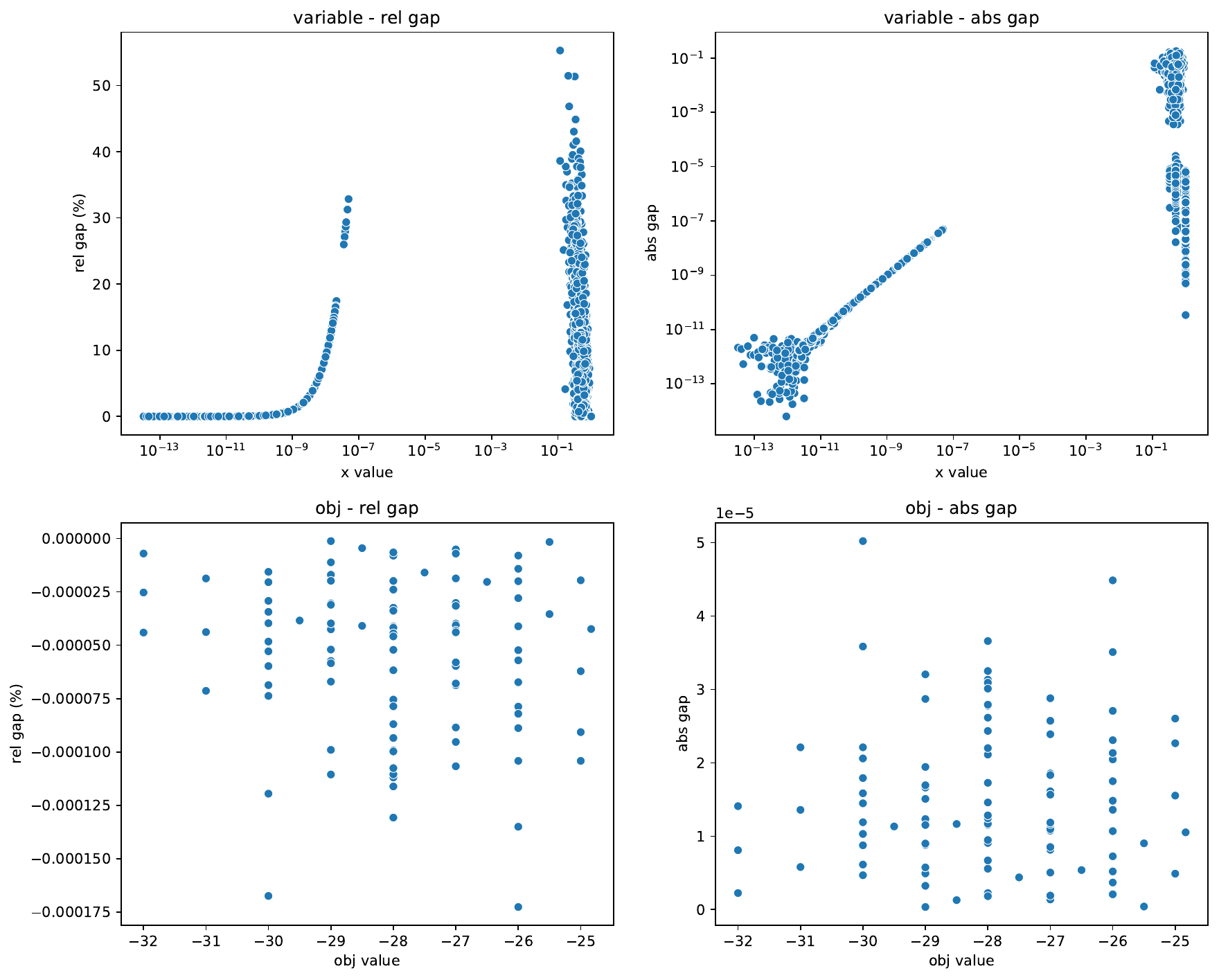}
      \caption{Solver benchmarking on small maximal independent set relaxation instances.}
      \label{fig:small_indset}
  \end{center}
\end{figure}
\end{document}